\documentclass[acmtog,nonacm]{acmart}
\setcopyright{none}
\authorsaddresses{}
\providecommand{\bm}[1]{\boldsymbol{#1}}
\usepackage{placeins}
\usepackage{multirow}
\usepackage{capt-of}
\usepackage{enumitem}
\usepackage{needspace}
\usepackage{xurl}
\title{CAST: Real-Time Motion Capture for Any Skeleton Topology}
\author{Pu Pang}
\affiliation{\institution{Xi'an Jiaotong University}\country{China}}
\affiliation{\institution{Zhongguancun Academy}\country{China}}

\author{Zhongfei Qing}
\affiliation{\institution{Tencent}\country{China}}

\author{Ling-Hao Chen}
\affiliation{\institution{Tsinghua University}\country{China}}

\author{Songnan Li}
\affiliation{\institution{Tencent}\country{China}}

\author{Hua Huang}
\authornote{Corresponding author.}
\affiliation{\institution{Beijing Normal University}\country{China}}

\author{Bo Hu}
\authornote{Project leader.}
\affiliation{\institution{Tencent}\country{China}}

\renewcommand{\shortauthors}{Pang et al.}

\begin{document}
\raggedbottom

\begin{abstract}
This paper studies the problem of recovering motion from monocular video, even when the target skeleton topology differs from the subject observed in the video.
Although previous methods have made efforts to resolve this problem, they remain vulnerable to joint-rotation errors that are coupled across the skeleton and can propagate along its kinematic hierarchy.
To mitigate these errors, we present CAST, a topology-aware motion capture framework that combines learned local rotations with geometric correction guided by predicted global bone directions.
Our observability analysis characterizes which rotational components can be constrained by outgoing bone directions. Guided by this analysis, GRACE uses reliability-gated alignment to correct composed orientations in parent-to-child order, leaving unconstrained components to the learned rotation predictor.
The motion predictor and correction module are trained jointly with motion-derived supervision, using a frozen image encoder.
Experiments under the TopoCap evaluation protocol show that the proposed framework improves motion recovery across different target topologies, including unseen structures.
Reference-based cross-skeleton evaluation on a curated Mixamo benchmark and a user preference study further assess transfer to different target structures.
A causal CAST-B variant achieves over 51 FPS on a single NVIDIA A100 GPU, with timing covering image encoding and motion prediction.
Code repository: \url{https://github.com/fankewen/CAST}.
\end{abstract}

\keywords{motion capture, arbitrary skeletons, monocular video, cross-skeleton animation}
\begin{teaserfigure}
\centering
\includegraphics[width=\linewidth]{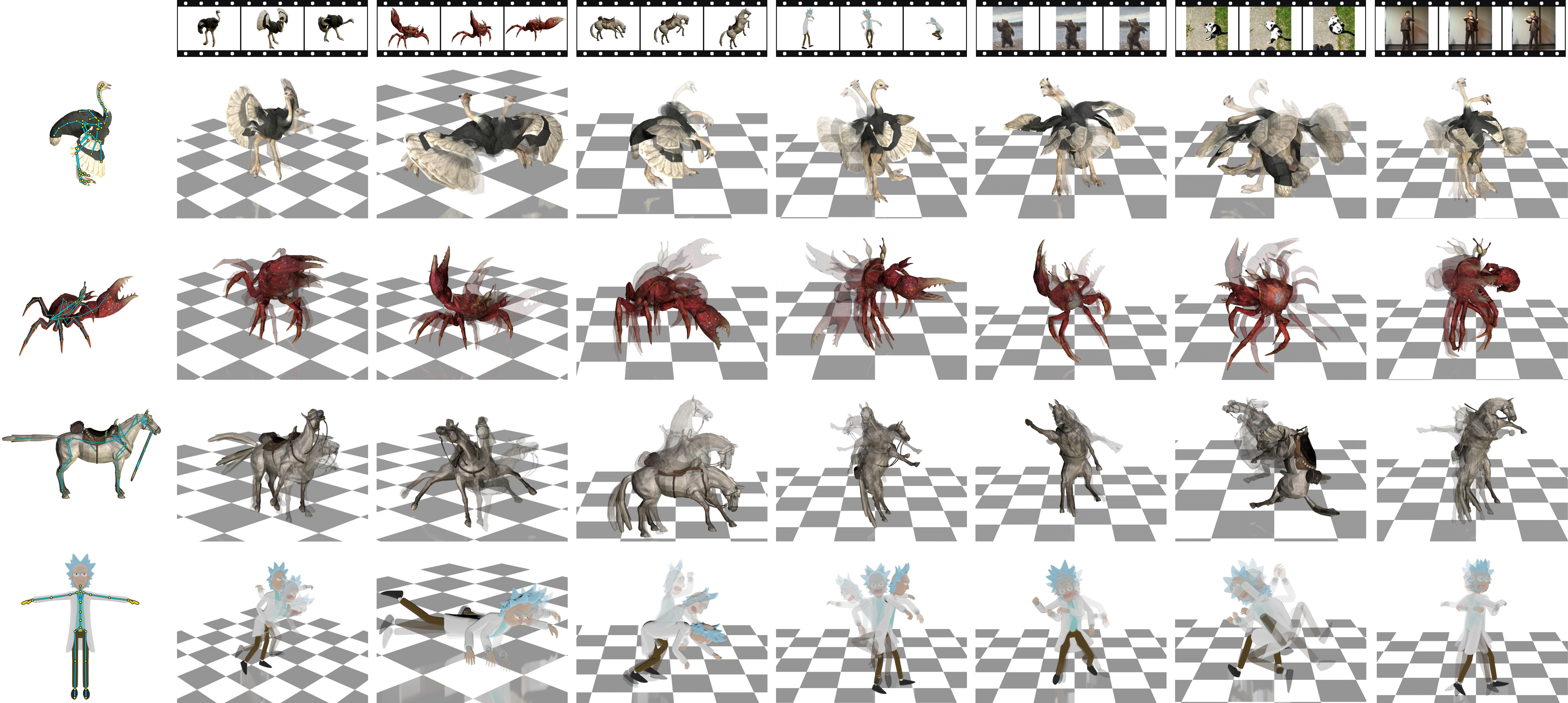}
\caption{CAST animates four target skeletons (left) from each of seven input clips (top).}
\label{fig:teaser}
\Description{Seven input video clips drive four different target skeletons using CAST.}
\end{teaserfigure}

\maketitle

\section{Introduction}
\label{sec:intro}

Motion capture animates digital characters in games, films, virtual avatars, and interactive applications~\citep{zhang2024smt,yang2025star}.
AI-generated videos broaden this task to creatures and stylized characters whose implicit articulated structures may differ from the rigs used for animation~\citep{hu2024animateanyone,men2025mimo,wang2026poseanything}.
This calls for recovering motion on a supplied target skeleton, even when its topology differs from that of the video subject.
However, most video-to-motion systems assume fixed human or animal templates, known joint correspondences, and predefined rotation conventions~\citep{motionbert2022,shin2024wham,shen2024gvhmr}, limiting their applicability to targets with different joint counts, branching structures, or rest geometries.

Video-driven 4D-mesh methods~\citep{wu2026rdmesh} recover vertex trajectories rather than rotations for an existing target rig.
Estimating source-skeleton motion and then retargeting it typically requires source--target bone correspondences or target motion examples~\citep{aberman2020skeleton,chen2025motion2motion}.
Direct target-skeleton methods avoid this explicit retargeting step. MoCapAnything~\citep{gong2025mocapanything} fits rotations to predicted joint trajectories using inverse kinematics (IK), while TopoCap~\citep{pu2026topocap} derives them from predicted global rotations. MoCapAnything V2~\citep{gong2026mocapanythingv2} replaces IK with a learned, reference-conditioned pose-to-rotation decoder.

However, arbitrary-skeleton motion capture presents two related challenges, as joint rotations are inherently ambiguous in monocular video and the target skeleton's kinematic hierarchy couples errors across joints.
Pose-based inverse kinematics fits rotations to predicted joint positions, but noisy trajectories can introduce fitting residuals, while joint positions alone do not fully constrain axial twist~\citep{gong2025mocapanything}.
Learned local-rotation prediction faces a different challenge because an error in a parent's rotation alters the global orientations of its descendants and can propagate along the kinematic chain~\citep{pavllo2018quaternet,pan2024romo}.
Predicting global orientations avoids recursive local-rotation composition during estimation, but recovering accurate parent-relative rotations still requires consistent orientation estimates across neighboring joints~\citep{pu2026topocap}.
The impact of these errors on downstream poses and end-effector positions depends on the target skeleton's chain depth, branching structure, and bone lengths.

The target skeleton determines not only error propagation but also which rotational components outgoing bone directions can constrain.
We therefore introduce CAST, a motion Capture system for Arbitrary Skeleton Topologies that combines learned local rotations with topology-dependent geometric correction.
CAST predicts initial local rotations and global bone directions with associated reliabilities; the latter provide geometric constraints without recursive local-rotation composition.
Our Gated Rotation Alignment and CorrEction (GRACE) module uses reliability-weighted alignment and a learned gate to correct composed global rotations while leaving unconstrained components to the rotation predictor.
Corrected parent orientations are then used to compose child rotations, allowing correction to intervene during hierarchical propagation.

Our contributions are as follows:
\begin{itemize}[leftmargin=*]
    \item We introduce CAST, a unified motion capture framework for diverse skeleton topologies. CAST-L reduces MPJPE by 27.6--62.5\% on matched-rig benchmarks and by 54.7\% on the cross-skeleton benchmark relative to our reimplementation of MoCapAnything V2.
    \item We establish the relationship between bone-direction constraints and rotation recoverability and use this analysis to develop GRACE. The full GRACE configuration reduces MPJPE by 8.6--15.3\% across different visual backbones and dataset splits.
    \item We extend CAST to real-time streaming motion capture and motion completion from sparse observations, with the streaming variant exceeding 51 FPS on a single A100 GPU. We also demonstrate sketch-guided in-betweening and provide a Blender add-on for real-time motion capture.
\end{itemize}

\section{Related Work}
\label{sec:related}

\paragraph{Video-based motion capture for arbitrary skeletons.}
Early video-based motion capture methods typically rely on fixed human or animal body models.
MotionBERT~\citep{motionbert2022} learns general human-motion representations, while WHAM and GVHMR~\citep{shin2024wham,shen2024gvhmr} recover world-grounded human motion with body-specific priors.
Recent work extends video-based capture to arbitrary rigged assets.
MoCapAnything~\citep{gong2025mocapanything} predicts target-joint trajectories and recovers asset-specific rotations with inverse kinematics.
MoCapAnything V2~\citep{gong2026mocapanythingv2} replaces this analytical stage with a learnable, reference-conditioned pose-to-rotation module and jointly trains the video-to-pose and pose-to-rotation paths.
These methods differ in how they represent the intermediate motion and how the target skeleton enters the recovery process, ranging from joint positions and inverse kinematics to directly learned local rotations.

\paragraph{Cross-skeleton motion transfer and topology-agnostic representations.}
Motion retargeting methods learn to transfer motion between characters with different structures using skeleton-aware operators or shared motion representations.
Neural kinematic networks and skeleton-aware networks learn motion mappings across character structures~\citep{villegas2018nkn,aberman2020skeleton}, while HuMoT uses a topology-agnostic transformer representation for character animation retargeting~\citep{mourot2023humot}.
More recent approaches use topology-conditioned generation or shared structural representations, including AnyTop~\citep{gat2025anytop}, TopoCap~\citep{pu2026topocap}, and SAMoR~\citep{zhang2026samor}.
Motion2Motion~\citep{chen2025motion2motion} transfers motion across topologies using sparse source--target correspondences and target motion examples.
Other video-guided asset animation methods, such as Puppeteer~\citep{song2025puppeteer} and R-DMesh~\citep{wu2026rdmesh}, use rigging or dynamic-mesh representations to connect visual input with animation.
These approaches differ in their input modality, use of source--target correspondence, and output representation.
CAST focuses on RGB-conditioned motion capture with an existing target skeleton and predicts local rotations for that skeleton without requiring an explicit source--target joint map.

\section{Problem Setup}
\label{sec:setup}

Given an RGB video $\mathcal I_{1:T}$ and a target skeleton $\mathcal S=\allowbreak(\mathcal V,\allowbreak\mathcal E,\allowbreak p,\allowbreak\mathbf o,\allowbreak\mathbf L^{\mathrm{rest}})$, we predict per-frame local joint rotations $\mathbf L_{t,j}\in\mathtt{SO}(3)$ that drive the target rig through the observed motion.
The skeleton is a rooted tree with joints $\mathcal V=\{0,\ldots,J-1\}$ and parent map $p(\cdot)$, where $p(0)=-1$.
Its rest pose is specified by parent-local offsets $\mathbf o_j\in\mathbb R^3$ and local joint rotations $\mathbf L^{\mathrm{rest}}_j\in\mathtt{SO}(3)$.
In this paper, we consider two instances of this problem.
For \emph{matched-rig capture}, the video depicts the supplied target rig.
For \emph{cross-skeleton animation}, the video depicts a different source rig from the supplied target skeleton, and no source--target joint correspondence is assumed.

\section{Method}
\label{sec:method}

Figure~\ref{fig:overview}(a) presents CAST, our framework for motion capture with topology-dependent rotation correction.
Its central component, GRACE, uses predicted global bone directions to correct rotation errors propagated through the skeleton hierarchy.
The geometry of a joint's outgoing bones determines the rotational constraints available for correction, while directional reliability and a learned gate control the updates.
GRACE applies these updates in parent-to-child order, incorporating each corrected orientation into the subsequent rotation composition.
To provide the initial local rotations and features for direction prediction, we adapt the prediction backbone of MoCapAnything~V2~\citep{gong2026mocapanythingv2} and train it jointly with GRACE.
The backbone architecture is detailed in Appendix~\ref{app:architecture}.

\begin{figure*}[!t]
\centering
\includegraphics[width=\linewidth]{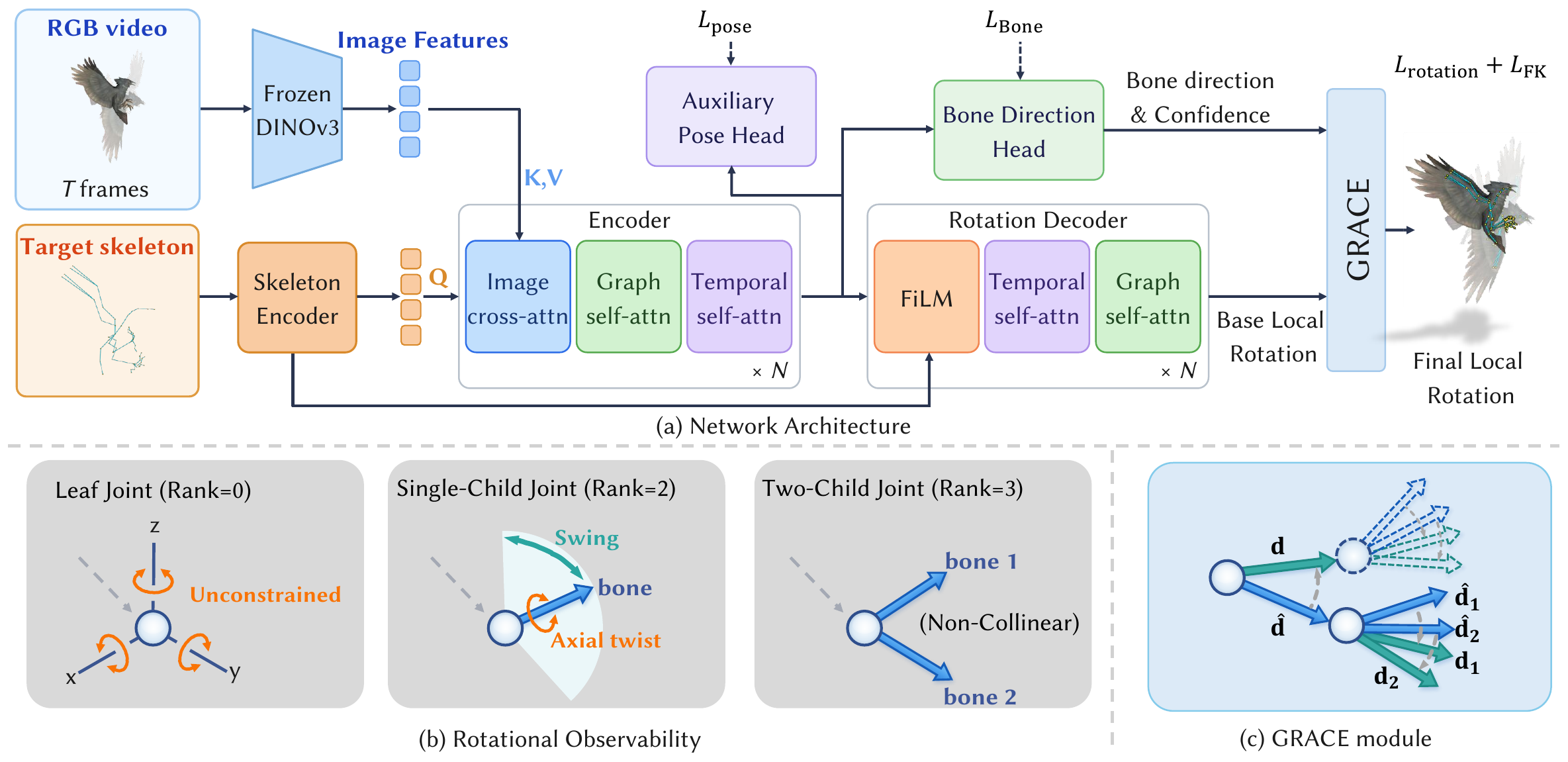}
\caption{CAST overview.
(a) CAST predicts initial local rotations and global bone directions from an RGB video and a target skeleton, then refines the rotations with GRACE.
(b) Rotational observability: a leaf provides no directional constraint, a single child constrains swing but not axial twist, and two non-collinear child bones constrain the full rotation.
(c) GRACE aligns FK directions $\widehat{\mathbf d}$ (blue) with predicted directions $\mathbf d$ (teal) through minimum-angle swing for a single direction or joint alignment of multiple directions.}
\label{fig:overview}
\end{figure*}

\subsection{Topology-dependent observability and error correction}
\label{sec:cast}

\paragraph{Rotation composition.}
As shown in Figure~\ref{fig:overview}(a), the method extends from the backbone of MoCapAnything~V2~\citep{gong2026mocapanythingv2}, predicting the initial local prediction $\widehat{\mathbf L}^{0}_{t,i}$ of joint $i$. 
For joint $i$ with parent $p(i)$, composing its local rotation with the parent's global rotation gives $\widehat{\mathbf G}_{t,i}$, which transforms its outgoing rest-pose bone directions into global directions $\widehat{\mathbf d}_{t,ij}$,
\begin{equation}
\widehat{\mathbf G}_{t,i}=
\begin{cases}
\widehat{\mathbf L}^{0}_{t,0},&i=0,\\
\widehat{\mathbf G}_{t,p(i)}\widehat{\mathbf L}^{0}_{t,i},&i>0,
\end{cases}
\qquad
\widehat{\mathbf d}_{t,ij}=\widehat{\mathbf G}_{t,i}\mathbf r_{ij}.
\label{eq:recurrence}
\end{equation}
Here $\mathbf r_{ij}=\mathbf o_j/\lVert\mathbf o_j\rVert_2$ is the unit rest-pose direction of bone $i\to j$ in joint $i$'s local axes, assuming $\lVert\mathbf o_j\rVert_2>0$ in the directional analysis, and $\widehat{\mathbf d}_{t,ij}\in\mathbb S^2$ is the composed bone direction before correcting joint $i$. The hat marks quantities before correction: $\widehat{\mathbf L}^{0}$ for the initial local rotations, and $\widehat{\mathbf G}$ for global rotations and forward-kinematic (FK) directions.

\paragraph{Ideal geometric correction.}
Let $\mathcal C(i)=\{j \mid p(j)=i\}$ denote the children of $i$.
As there is a gap between the composed direction $\widehat{\mathbf d}_{t,ij}$ and ground-truth direction ${\mathbf d}^{\mathrm{gt}}_{t,ij}$, we introduce a rotation factor $\mathbf C\in\mathtt{SO}(3)$ to correct it.
We define the ideal correction as
\begin{equation}
\mathbf C^{\mathrm{ideal}}_{t,i}=
\underset{\mathbf C\in\mathtt{SO}(3)}{\arg\min}\;
\sum_{j\in\mathcal C(i)}
\lVert\mathbf C\widehat{\mathbf d}_{t,ij}-\mathbf d^{\mathrm{gt}}_{t,ij}\rVert_2^2.
\label{eq:correction_ideal}
\end{equation}

\paragraph{Rotational observability.}
The outgoing bone geometry determines which rotational degrees of freedom can be constrained in Eq.~\ref{eq:correction_ideal}.
For an infinitesimal rotation $\delta\boldsymbol\omega\in\mathbb R^3$ applied on the left of $\widehat{\mathbf G}_{t,i}$, the first-order change in each bone direction is
\begin{equation}
\delta\widehat{\mathbf d}_{t,ij}
=\delta\boldsymbol\omega\times\widehat{\mathbf d}_{t,ij}.
\label{eq:direction_jacobian}
\end{equation}
Thus, each bone constrains the two rotation components perpendicular to its axis, represented by the projector $\mathbf I-\widehat{\mathbf d}_{t,ij}\widehat{\mathbf d}_{t,ij}^{\mathsf T}$, where $\mathbf I$ is the $3\times3$ identity matrix.

For nonempty $\mathcal C(i)$, averaging these per-bone projectors gives the normalized Gauss--Newton matrix
\begin{equation}
\begin{aligned}
\mathbf A^{\mathrm{glob}}_{t,i}
&=\frac{1}{|\mathcal C(i)|}\sum_{j\in\mathcal C(i)}
\bigl(\mathbf I-\widehat{\mathbf d}_{t,ij}\widehat{\mathbf d}_{t,ij}^{\mathsf T}\bigr)
\\
&=\widehat{\mathbf G}_{t,i}
\underbrace{\left[
\frac{1}{|\mathcal C(i)|}\sum_{j\in\mathcal C(i)}
\bigl(\mathbf I-\mathbf r_{ij}\mathbf r_{ij}^{\mathsf T}\bigr)
\right]}_{\mathbf A_i^{\mathrm{loc}}}
\widehat{\mathbf G}_{t,i}^{\mathsf T}.
\end{aligned}
\label{eq:direction_information}
\end{equation}
\begin{definition}[Rotational observability]
\label{def:rotational_observability}
The locally observable rotational subspace of joint $i$, expressed in its rest-pose local axes, is defined as the range of $\mathbf A_i^{\mathrm{loc}}$.
Its dimension, $\operatorname{rank}(\mathbf A_i^{\mathrm{loc}})$, is the number of observable rotational degrees of freedom constrained by its outgoing bone directions.
\end{definition}
Equation~\ref{eq:direction_information} rotates this representation into the current pose without changing its rank, so the number of observable rotational degrees of freedom is independent of the joint's orientation.
Figure~\ref{fig:overview}(b) illustrates three cases: for leaves, we define both matrices as zero, giving rank zero; a single direction or collinear directions give rank two, constraining swing but leaving axial twist unobservable; two or more non-collinear directions give rank three and constrain full rotation. Derivation in Appendix~\ref{app:observability_derivation}.

\begin{lemma}[Uniqueness of the ideal correction]
\label{lem:ideal_correction}
Suppose all ground-truth outgoing bone directions at joint $i$ are obtained by applying the same rotation $\mathbf G^{\mathrm{gt}}_{t,i}$ to their corresponding rest-pose directions.
Then Eq.~\ref{eq:correction_ideal} has a unique solution if and only if $\operatorname{rank}(\mathbf A_i^{\mathrm{loc}})=3$, in which case $\mathbf C^{\mathrm{ideal}}_{t,i}=\mathbf G^{\mathrm{gt}}_{t,i}\widehat{\mathbf G}_{t,i}^{\mathsf T}$ (proof in Appendix~\ref{app:error_cancellation}).
\end{lemma}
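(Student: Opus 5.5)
The plan is to reduce Eq.~\ref{eq:correction_ideal} to an exact-fit problem and then characterize the stabilizer of the rest-pose direction set. By hypothesis $\mathbf d^{\mathrm{gt}}_{t,ij}=\mathbf G^{\mathrm{gt}}_{t,i}\mathbf r_{ij}$ for all $j\in\mathcal C(i)$, while $\widehat{\mathbf d}_{t,ij}=\widehat{\mathbf G}_{t,i}\mathbf r_{ij}$ by Eq.~\ref{eq:recurrence}. The objective is nonnegative, and $\mathbf C_\star=\mathbf G^{\mathrm{gt}}_{t,i}\widehat{\mathbf G}_{t,i}^{\mathsf T}$ makes every term vanish. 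Hence $\mathbf C_\star$ is always a minimizer, and the minimizers are exactly the rotations $\mathbf C$ with $\mathbf C\widehat{\mathbf G}_{t,i}\mathbf r_{ij}=\mathbf G^{\mathrm{gt}}_{t,i}\mathbf r_{ij}$ for all $j$. Write $\mathbf C=\mathbf G^{\mathrm{gt}}_{t,i}\mathbf R\,\widehat{\mathbf G}_{t,i}^{\mathsf T}$, which is a bijection $\mathbf R\leftrightarrow\mathbf C$ on $\mathtt{SO}(3)$. The condition becomes $\mathbf R\mathbf r_{ij}=\mathbf r_{ij}$ for all $j$. So the solution is unique if and only if the only rotation fixing every $\mathbf r_{ij}$ is $\mathbf I$, and in that case the solution is $\mathbf C_\star$, which is the stated formula.

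Next I would relate this stabilizer condition to the rank of $\mathbf A_i^{\mathrm{loc}}$. The matrix $\mathbf A_i^{\mathrm{loc}}$ is an average of positive semidefinite projectors. Its kernel is therefore the intersection of the kernels of the $\mathbf I-\mathbf r_{ij}\mathbf r_{ij}^{\mathsf T}$, namely $\bigcap_j\operatorname{span}(\mathbf r_{ij})$. This intersection is nonzero exactly when all $\mathbf r_{ij}$ are collinear. For a leaf, $\mathbf A_i^{\mathrm{loc}}$ is zero by convention and the kernel is all of $\mathbb R^3$. This gives the trichotomy behind Definition~\ref{def:rotational_observability}: rank $3$ if and only if some pair $\mathbf r_{ij},\mathbf r_{ik}$ is non-collinear.

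The two directions of the equivalence then follow.

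\textbf{Rank less than $3$ implies non-uniqueness.} For a leaf, the objective is identically zero, so every $\mathbf C$ is a minimizer. For collinear directions, all $\mathbf r_{ij}=\pm\mathbf u$ for a common unit $\mathbf u$. Every rotation $\mathbf R$ about $\mathbf u$ fixes each $\mathbf r_{ij}$, producing a one-parameter family of minimizers $\mathbf G^{\mathrm{gt}}_{t,i}\mathbf R\,\widehat{\mathbf G}_{t,i}^{\mathsf T}$. This is the axial-twist ambiguity.

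\textbf{Rank $3$ implies uniqueness.} Take non-collinear $\mathbf a=\mathbf r_{ij}$ and $\mathbf b=\mathbf r_{ik}$. A rotation fixing both also fixes $\mathbf a\times\mathbf b\neq 0$, since rotations preserve cross products. It therefore fixes a basis of $\mathbb R^3$ and equals $\mathbf I$.

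The main subtlety is the first reduction: identifying the argmin set with the exact-fit set. This requires the hypothesis that a single $\mathbf G^{\mathrm{gt}}_{t,i}$ generates all ground-truth directions, which guarantees the minimum value is zero. Without it, the problem becomes a general Wahba/Procrustes problem, and uniqueness would need an SVD-based argument involving the singular values of $\sum_j\mathbf d^{\mathrm{gt}}_{t,ij}\widehat{\mathbf d}_{t,ij}^{\mathsf T}$. The rest, including the degenerate leaf case and the sign ambiguity $\pm\mathbf u$ in the collinear case, is routine bookkeeping.
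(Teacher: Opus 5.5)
Your proposal is correct and follows essentially the same route as the paper's proof. Both arguments reparameterize the correction so that the zero-cost candidate $\mathbf G^{\mathrm{gt}}_{t,i}\widehat{\mathbf G}_{t,i}^{\mathsf T}$ identifies the minimizer set with the rotations fixing every rest direction; uniqueness in the rank-three case then comes from the cross-product basis argument, and the converse from rotations about the common axis, or from the constant objective for leaves. Your explicit kernel computation $\ker\mathbf A_i^{\mathrm{loc}}=\bigcap_j\operatorname{span}(\mathbf r_{ij})$ is the same observation as the paper's quadratic-form identity in the observability appendix, stated directly.
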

In the fully observable case, form the cross-covariance matrix and its SVD:
\begin{equation}
\mathbf H^{\mathrm{ideal}}_{t,i}
=\sum\nolimits_{j\in\mathcal C(i)}\mathbf d^{\mathrm{gt}}_{t,ij}\widehat{\mathbf d}_{t,ij}^{\mathsf T}
=\mathbf U_{t,i}\boldsymbol\Sigma_{t,i}\mathbf V_{t,i}^{\mathsf T},
\label{eq:ideal_cross_covariance}
\end{equation}
where $\mathbf U_{t,i}$ and $\mathbf V_{t,i}$ are orthogonal and $\boldsymbol\Sigma_{t,i}$ contains the singular values in non-increasing order.
The Wahba/Kabsch solution~\citep{wahba1965,kabsch1976} is
\begin{equation}
\mathbf C^{\mathrm{ideal}}_{t,i}
=\mathbf U_{t,i}\operatorname{diag}(1,1,\det(\mathbf U_{t,i}\mathbf V_{t,i}^{\mathsf T}))\mathbf V_{t,i}^{\mathsf T}.
\label{eq:ideal_closed_form}
\end{equation}
Applying this solution gives the corrected global rotation $\mathbf G_{t,i}=\mathbf C^{\mathrm{ideal}}_{t,i}\widehat{\mathbf G}_{t,i}$.
Under the conditions of Lemma~\ref{lem:ideal_correction}, $\mathbf G_{t,i}=\mathbf G^{\mathrm{gt}}_{t,i}$, so the error in the composed global rotation is removed.

With compatible directions but rank two, we select a minimum-angle swing that aligns the bone directions; axial twist remains undetermined (Appendix~\ref{app:alignment}).
For rank zero, no bone direction constrains the correction, and we set $\mathbf C^{\mathrm{ideal}}_{t,i}=\mathbf I$.

\subsection{GRACE: reliability-gated rotation correction}
\label{sec:cast_gating}

\paragraph{Confidence-weighted alignment.}
In practice, GRACE replaces the ground-truth directions $\mathbf d^{\mathrm{gt}}_{t,ij}$ assumed in Section~\ref{sec:cast} with predicted directions $\mathbf d_{t,ij}$.
The bone direction head uses a 7-layer MLP to fuse parent--child features and bone geometry, predicting global unit directions and reliabilities $c_{t,ij}\in[0,1]$.
We align the FK-derived directions $\widehat{\mathbf d}_{t,ij}=\widehat{\mathbf G}_{t,i}\mathbf r_{ij}$ with these predictions, weighting each bone by its reliability,
\begin{equation}
\mathbf C^*_{t,i}=\underset{\mathbf C\in\mathtt{SO}(3)}{\arg\min}\;
\sum\nolimits_{j\in\mathcal C(i)}c_{t,ij}
\lVert\mathbf C\widehat{\mathbf d}_{t,ij}-\mathbf d_{t,ij}\rVert_2^2.
\label{eq:correction}
\end{equation}
We compute $\mathbf C^*_{t,i}$ numerically using stabilized Wahba/Kabsch alignment~\citep{wahba1965,kabsch1976} for well-conditioned multi-direction inputs.
Single-direction inputs use a minimum-angle swing.
Figure~\ref{fig:overview}(c) illustrates the alignment, with solver details provided in Appendices~\ref{app:alignment} and~\ref{app:numerics}.

\paragraph{Gated correction.}
To prevent inaccurate direction predictions from disrupting early training, we use a learned gate initialized to favor small corrections.
Joint features, directional reliability and disagreement, and target-skeleton geometry determine its strength $g_{t,i}\in[0,1]$, which scales the alignment proposal in the Lie algebra,
\begin{equation}
\widetilde{\mathbf C}_{t,i}
=\mathtt{exp}\!\left(g_{t,i}\mathtt{log}(\mathbf C^*_{t,i})\right).
\label{eq:lie_gate}
\end{equation}
Here $\widetilde{\mathbf C}_{t,i}$ is the applied correction, and $\mathtt{log}$ and $\mathtt{exp}$ denote the rotation logarithm and matrix exponential.
Gate and logarithm details are given in Appendix~\ref{app:numerics}.

GRACE corrects rotations level by level from the root to the leaves.
The root starts from $\widehat{\mathbf G}_{t,0}=\widehat{\mathbf L}^{0}_{t,0}$.
Each non-root joint composes its initial local rotation with its parent's corrected global rotation, giving $\widehat{\mathbf G}_{t,i}=\mathbf G_{t,p(i)}\widehat{\mathbf L}^{0}_{t,i}$.
At each level, we recompute the outgoing FK directions, solve the alignment in Eq.~\ref{eq:correction}, and left-multiply each provisional rotation by its gated correction $\widetilde{\mathbf C}_{t,i}$.
The corrected global rotations then serve as the parent rotations for the next level.
Finally, we recover local rotations as $\mathbf L_{t,i}=\mathbf G_{t,p(i)}^{\mathsf T}\mathbf G_{t,i}$ for $i>0$, with $\mathbf L_{t,0}=\mathbf G_{t,0}$.

\subsection{Training objective}
\label{sec:training}

We directly supervise the bone direction head with a direction loss $\mathcal L_{\mathrm{dir}}$ and a reliability loss $\mathcal L_{\mathrm{rel}}$:
\begingroup
\allowdisplaybreaks[4]
\begin{align}
\mathcal L_{\mathrm{dir}}&=\mathbb E\!\left[1-\mathbf d_{t,ij}^{\mathsf T}\mathbf d^{\mathrm{gt}}_{t,ij}\right],
\label{eq:direction_loss}\\
\mathcal L_{\mathrm{rel}}&=\mathbb E\!\left[\rho\!\left(c_{t,ij}-\operatorname{sg}\!\left[\exp\!\left(-\alpha_{t,ij}/\tau\right)\right]\right)\right],
\label{eq:reliability_target}\\
\alpha_{t,ij}&=\arccos(\mathbf d_{t,ij}^{\mathsf T}\mathbf d^{\mathrm{gt}}_{t,ij}),
\label{eq:direction_angular_error}
\end{align}
\endgroup
where $\alpha_{t,ij}$ is the angular error in radians, $\rho$ is the Smooth L1 penalty, $\tau>0$ controls the target's decay, and $\operatorname{sg}(\cdot)$ denotes the stop-gradient operator.
We apply $\mathcal L_{\mathrm{rotation}}$ and $\mathcal L_{\mathrm{FK}}$ to the corrected local rotations $\mathbf L_{t,i}$ and their FK positions, respectively, with no separate loss on the initial predictions.
The auxiliary pose loss $\mathcal L_{\mathrm{pose}}$ supervises the pose head's joint positions and adjacent-frame displacements.
Combining these terms with $\mathcal L_{\mathrm{bone}}=\mathcal L_{\mathrm{dir}}+0.1\mathcal L_{\mathrm{rel}}$ gives
\begin{equation}
\mathcal L=\mathcal L_{\mathrm{pose}}
+\mathcal L_{\mathrm{rotation}}
+\lambda_{\mathrm{FK}}\mathcal L_{\mathrm{FK}}
+\lambda_{\mathrm{bone}}\mathcal L_{\mathrm{bone}}.
\label{eq:objective}
\end{equation}
The component losses, weights, and reductions are detailed in Appendix~\ref{app:training}.

\section{Experiments}
\label{sec:exp}

\begin{table*}[!t]
\caption{Combined matched-rig and cross-skeleton results.
Lower is better for all metrics, and a dash denotes an unreported result.
MPJPE and PA-MPJPE are reported in millimeters, MPJVE in millimeters per frame, and CD in normalized coordinates.
}
\label{tab:combined}
\centering
\setlength{\tabcolsep}{6pt}
\begin{tabular*}{\linewidth}{@{\extracolsep{\fill}}lllrrrr@{}}
\toprule
Task&Dataset&Model&MPJPE&PA-MPJPE&MPJVE&CD\\\midrule
\multirow{15}{*}{Matched-rig}&\multirow{5}{*}{Zoo}&Puppeteer&454.70&236.38&19.74&0.245\\
&&TopoCap&--&--&--&--\\
&&MCA V2$^{\ast}$&105.52&67.04&10.35&0.077\\
&&CAST-B&42.68&33.23&5.48&0.036\\
&&CAST-L&\textbf{39.59}&\textbf{31.05}&\textbf{5.14}&\textbf{0.034}\\\cmidrule(lr){2-7}
&\multirow{5}{*}{MObj Seen}&Puppeteer&205.85&103.89&22.09&0.136\\
&&TopoCap&157.51&74.04&34.86&0.096\\
&&MCA V2$^{\ast}$&94.38&58.62&16.71&0.071\\
&&CAST-B&55.39&32.92&9.73&0.043\\
&&CAST-L&\textbf{52.19}&\textbf{30.91}&\textbf{9.10}&\textbf{0.041}\\\cmidrule(lr){2-7}
&\multirow{5}{*}{MObj Unseen}&Puppeteer&181.63&93.23&21.10&0.119\\
&&TopoCap&191.77&82.11&38.93&0.109\\
&&MCA V2$^{\ast}$&103.04&61.98&16.44&0.074\\
&&CAST-B&79.21&46.52&10.73&0.057\\
&&CAST-L&\textbf{74.63}&\textbf{44.21}&\textbf{10.19}&\textbf{0.054}\\\midrule
\multirow{4}{*}{Cross-skeleton}&\multirow{4}{*}{Mixamo}&TopoCap&363.59&154.72&57.20&0.174\\
&&MCA V2$^{\ast}$&277.35&127.38&33.64&0.135\\
&&CAST-B&149.50&85.79&19.60&0.087\\
&&CAST-L&\textbf{125.62}&\textbf{71.16}&\textbf{15.74}&\textbf{0.074}\\\bottomrule
\end{tabular*}
\end{table*}

\begin{figure*}[!t]
\centering
\includegraphics[width=1.0\linewidth]{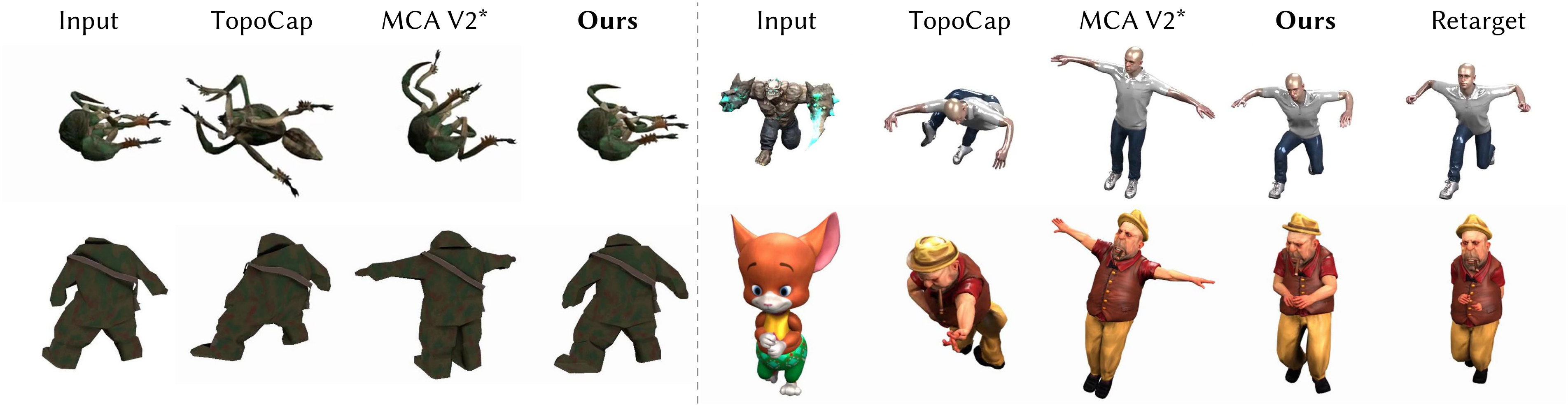}
\caption{Qualitative comparison with baselines.
\textbf{Left:} matched-rig motion capture.
\textbf{Right:} cross-skeleton animation, with retargeted target-character references in the final column (Retarget).
Each example shows an input frame and the three methods' outputs on the same target character.}
\label{fig:qualitative_results}
\end{figure*}

\subsection{Experimental setup}

\paragraph{Datasets.}
We train CAST jointly on Truebones Zoo (Zoo)~\citep{truebones2022} and MObjaverse (MObj), using a 95\%/5\% train/test split for Zoo and TopoCap's Train/Seen/Unseen splits for MObj~\citep{pu2026topocap,deitke2023objaversexl}.
Cross-skeleton evaluation uses 60 held-out Mixamo animations~\citep{adobe2026mixamo}, ten per character skeleton across six skeletons.
Pairing each source sequence with the other five skeletons yields 300 ordered source--target pairs.
Mixamo-retargeted animations on the target characters serve as references used only for evaluation.
Data and evaluation details are given in Appendices~\ref{app:data} and~\ref{app:evaluation_protocol}.

\paragraph{Implementation details.}
We evaluate two model sizes. CAST-B uses 8 blocks of width 256 in both the encoder and rotation decoder, while CAST-L uses 16 blocks of width 384.
Both use frozen DINOv3~\citep{simoni2025dinov3} image features by default. 
Ablation and task-specific experiments use CAST-B unless noted otherwise.
We train for 6{,}000 steps with AdamW on 4 NVIDIA-A100 GPUs, using a batch size of 256 and 81-frame clips. 
The learning-rate schedule and loss weights are given in Appendix~\ref{app:training}.
For baselines, we use the TopoCap~\citep{pu2026topocap} official checkpoint and reimplement MoCapAnything V2~\citep{gong2026mocapanythingv2}, denoted by MCA V2$^{\ast}$, training it on the same Zoo and MObj data as CAST.
We also evaluate Puppeteer~\citep{song2025puppeteer} for matched-rig motion capture using its released video-guided optimizer and pretrained tracking, depth, and optical-flow models on the supplied rigged assets.

\paragraph{Metrics.}
For both tasks, we report MPJPE, PA-MPJPE, MPJVE, and Chamfer distance (CD) on root-centered predictions and references over complete animations.
Following TopoCap's unit-bounding-box normalization and reporting convention~\citep{pu2026topocap}, MPJPE and PA-MPJPE are reported in millimeters, MPJVE in millimeters per frame, and CD in normalized coordinates.
Metric definitions and temporal-window handling are detailed in Appendices~\ref{app:metrics} and~\ref{app:evaluation_protocol}.

\subsection{Motion capture and cross-skeleton animation}
\label{sec:main_results}

\paragraph{Matched-rig motion capture.}
CAST recovers the character's articulation more faithfully across both animal and humanoid rigs.
In Figure~\ref{fig:qualitative_results} (left), CAST-L preserves the folded leg configuration of the multi-limbed character, whereas the baselines spread or raise limbs that are folded in the input.
For the humanoid, it also avoids the outstretched-arm pose produced by MCA V2$^{\ast}$.
These differences concern the relative arrangement of body parts, which an overall orientation adjustment alone cannot resolve.
The quantitative results support this observation: CAST-L gives the lowest reported errors on all four metrics across Zoo, MObj Seen, and MObj Unseen (Table~\ref{tab:combined}), with MPJPE reductions of 27.6--62.5\% relative to MCA V2$^{\ast}$.
The improvements persist after Procrustes alignment, indicating better articulated pose recovery beyond global alignment, while lower MPJVE shows that joint displacements also follow the reference motion more closely.

This behavior is consistent with CAST's use of target-skeleton queries to organize visual evidence and global bone directions to constrain the orientations produced by local-rotation composition.
The contribution of the complete GRACE configuration is examined separately in Section~\ref{sec:ablation}.
Generalization to unseen structures remains more difficult: CAST-L's MPJPE rises from 52.19 on MObj Seen to 74.63 on MObj Unseen, even though it retains an advantage over every evaluated baseline.
Thus, accommodating a variable rig does not remove the difficulty of interpreting unfamiliar articulations from images.
Additional matched-rig examples are provided in Appendix~\ref{app:qualitative_supplement}.

\paragraph{Cross-skeleton animation.}
Cross-skeleton animation requires preserving the observed action while expressing it through a different body structure.
Figure~\ref{fig:qualitative_results} (right) shows this distinction: CAST-L retains the bent arms and forward-leaning torso of the running example, and the hand-together gesture of the compact source character.
In the latter example, the retargeted reference places the arms inside the bulkier target's torso, whereas CAST-L keeps the hands in front of the body.
The action is recognizable despite a change in arm configuration, illustrating the value of conditioning motion on the target rig.
This also explains why distance to a retargeted reference captures only part of transfer quality: retargeting across different body proportions can introduce interpenetration~\citep{villegas2021contact}.
Qualitative examples of the four transfer directions are in Appendix~\ref{app:qualitative_cross_skeleton}.

Both CAST variants nevertheless improve all four reference-based metrics over the evaluated baselines (Table~\ref{tab:combined}).
CAST-L reduces MPJPE by 54.7\% relative to MCA V2$^{\ast}$, and the accompanying reductions in PA-MPJPE and MPJVE indicate gains in articulation and frame-to-frame motion as well as overall pose agreement.
Increasing capacity from CAST-B to CAST-L brings a larger MPJPE reduction here (16.0\%) than in the matched-rig evaluations (5.8--7.2\%).
This suggests that mapping visual action cues to a different target structure benefits particularly from additional model capacity in the evaluated setting.
The qualitative body adaptation and quantitative reference agreement provide complementary evidence, although neither establishes collision-free motion in general.

\begin{figure}[!htbp]
\centering
\includegraphics[width=\linewidth]{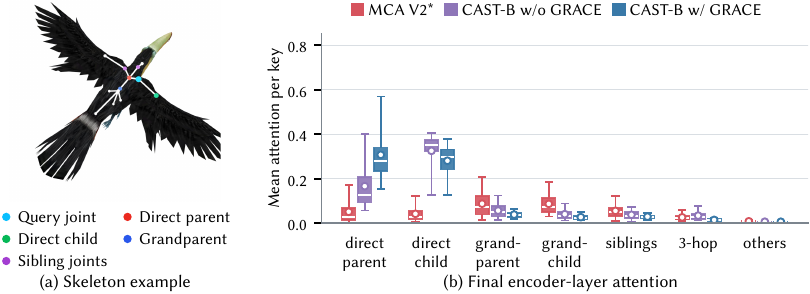}
\caption{Graph attention by kinematic relation on Zoo.
(a) An illustrative query joint and its topological relations.
(b) Final encoder-layer attention distributions pooled over valid frames and query joints across the evaluated Zoo sequences.}
\label{fig:graph_attention}
\end{figure}

\paragraph{Local and global modeling in CAST.}
For local-rotation predictors such as MCA V2$^{\ast}$, nonlocal joint context can help maintain global consistency as ancestor errors propagate through rotation composition.
GRACE reduces this burden by correcting composed orientations with predicted global bone directions, allowing the predictor to emphasize parent-relative articulation.
On Zoo, attention in the final encoder layer concentrates on direct parents and children.
Adding GRACE increases parent attention while reducing attention to distant relations (Figure~\ref{fig:graph_attention}), consistent with partial decoupling of local articulation estimation from global orientation correction.

\subsection{User study}
\begingroup
\setlength{\intextsep}{0pt}
\setlength{\columnsep}{12pt}
\begin{table}[!htbp]
\centering
\setlength{\tabcolsep}{4pt}
\setlength{\abovecaptionskip}{0pt}
\setlength{\belowcaptionskip}{4pt}
\caption{Overall user-study preferences.}
\label{tab:user_study}
\begin{tabular*}{\linewidth}{@{\extracolsep{\fill}}lrr@{}}\toprule
Method & First choice (\%) & Mean rank\\\midrule
TopoCap & 19.7 & 2.261\\
MCA V2$^{\ast}$ & 16.9 & 2.208\\
CAST-L & \textbf{63.3} & \textbf{1.531}\\
\bottomrule
\end{tabular*}
\end{table}

18 participants separately ranked 20 anonymized result triplets across both tasks by overall animation quality (protocol in Appendix~\ref{app:user_study}).
CAST-L obtains 63.3\% of first-choice votes across 360 rankings and achieves the best mean rank of 1.531 (Table~\ref{tab:user_study}), supporting the perceptual quality of its predictions.
\par
\endgroup

\subsection{Ablation study}
\label{sec:ablation}
\begin{table*}[!t]
\caption{GRACE and visual-encoder ablations on CAST-B.
Bold indicates the best result.}
\label{tab:matched_ablation}
\centering
\setlength{\tabcolsep}{6pt}
\begin{tabular*}{\linewidth}{@{\extracolsep{\fill}}lrrrrrr@{}}
\toprule
& \multicolumn{2}{c}{Zoo}
& \multicolumn{2}{c}{MObj Seen}
& \multicolumn{2}{c}{MObj Unseen} \\
Model & MPJPE & PA-MPJPE & MPJPE & PA-MPJPE & MPJPE & PA-MPJPE \\
\midrule
CAST-B w/o GRACE (DINOv2)
& \textbf{49.58} & \textbf{38.70}
& 64.56 & 37.13 & 93.32 & 52.12 \\
CAST-B w/o GRACE (DINOv3)
& 50.41 & 38.78
& \textbf{63.71} & \textbf{36.87}
& \textbf{86.64} & \textbf{50.71} \\
\midrule
CAST-B (DINOv2)
& \textbf{42.41} & \textbf{33.03}
& 57.34 & 33.27 & 83.15 & 48.19 \\
CAST-B (DINOv3)
& 42.68 & 33.23
& \textbf{55.39} & \textbf{32.92}
& \textbf{79.21} & \textbf{46.52} \\
\bottomrule
\end{tabular*}
\end{table*}

\begin{table*}[!t]
\caption{Causal inference and motion completion with CAST-B.
The Full sequence setting denotes the bidirectional model with dense input, and $k$ denotes the observation interval for key-frame completion.
Each Real-time model is trained separately for different chunk sizes.
Errors are reported in millimeters, with lower values indicating better performance.
Bold and underline mark the best and second-best results.}
\label{tab:causal}
\label{tab:completion}
\centering
\setlength{\tabcolsep}{4pt}
\begin{tabular*}{\linewidth}{@{\extracolsep{\fill}}lrrrrrr@{}}\toprule
& \multicolumn{2}{c}{Zoo} & \multicolumn{2}{c}{MObj Seen} & \multicolumn{2}{c}{MObj Unseen}\\
\cmidrule(lr){2-3}\cmidrule(lr){4-5}\cmidrule(lr){6-7}
Task setting & MPJPE & PA-MPJPE & MPJPE & PA-MPJPE & MPJPE & PA-MPJPE\\\midrule
Full sequence & \textbf{42.68} & \textbf{33.23} & 55.39 & \underline{32.92} & \textbf{79.21} & \textbf{46.52}\\
Key frame ($k{=}4$) & 45.61 & 35.15 & 58.52 & 35.84 & 85.93 & 49.65\\
Key frame ($k{=}8$) & 48.79 & 37.35 & 67.98 & 41.37 & 92.01 & 53.90\\
Key frame ($k{=}16$) & 58.53 & 43.68 & 82.73 & 50.15 & 104.45 & 61.69\\
Key frame ($k{=}32$) & 75.96 & 52.77 & 99.39 & 59.09 & 118.54 & 68.47\\
Real-time (chunk 1) & 44.09 & 34.00 & \textbf{52.46} & \textbf{32.32} & 80.79 & 47.23\\
Real-time (chunk 2) & 43.54 & 33.67 & 54.36 & 33.32 & 80.34 & 47.11\\
Real-time (chunk 4) & \underline{42.88} & \underline{33.25} & \underline{53.98} & 32.97 & \underline{79.62} & \underline{46.99}\\\bottomrule
\end{tabular*}

\end{table*}

\begin{table*}[!t]
\centering
\caption{Inference efficiency on a single NVIDIA A100.
Params (millions) and Time (ms per forward pass) include the frozen image backbone.
Frames denotes the input length per pass, and FPS measures streaming inference throughput.}
\label{tab:efficiency}
\setlength{\tabcolsep}{8pt}
\begin{tabular}{lrrrrrrrr}
\toprule
& & & \multicolumn{2}{c}{30 joints} & \multicolumn{2}{c}{52 joints} & \multicolumn{2}{c}{89 joints}\\
\cmidrule(lr){4-5}\cmidrule(lr){6-7}\cmidrule(lr){8-9}
Model & Params & Frames & Time & FPS & Time & FPS & Time & FPS\\
\midrule
TopoCap (10 steps) & 2913.8 & 8 & 321 & -- & 342 & -- & 321 & -- \\
TopoCap (300 steps) & 2913.8 & 8 & 8550 & -- & 9080 & -- & 8510 & -- \\
MCA V2 & 452.6 & 32 & 192 & -- & 191 & -- & 192 & -- \\
CAST-B & 352.6 & 81 & 142 & -- & 145 & -- & 154 & -- \\
CAST-L & 488.2 & 81 & 186 & -- & 189 & -- & 201 & -- \\
CAST-B (causal) & 352.6 & 1 & 19.5 & 51.3 & 18.5 & 53.9 & 18.9 & 52.9 \\
\bottomrule
\end{tabular}
\end{table*}

Table~\ref{tab:matched_ablation} evaluates GRACE and the choice of frozen visual encoder in CAST-B under the same training schedule.
With either DINOv2~\citep{oquab2023dinov2} or DINOv3, GRACE reduces MPJPE by 8.6--15.3\% across all splits.
The accompanying PA-MPJPE gains indicate improved articulation beyond global alignment.
DINOv3 provides its largest improvement over DINOv2 on MObj Unseen, lowering MPJPE by 4.7\% with GRACE, while performance on Zoo remains similar.
These results support DINOv3 for transfer to unfamiliar structures and show that the complete GRACE configuration provides consistent gains under both visual encoders.

\section{Application}
\label{sec:applications}

\subsection{Motion completion and causal inference}
We adapt CAST-B to key-frame motion completion and causal inference by independently fine-tuning each variant for 3{,}000 steps.
The completion model uses randomly masked visual inputs, while the three causal models use chunk sizes of 1, 2, and 4, respectively.

\paragraph{Key-frame motion completion.}
The completion variant enables sketch-guided animation inbetweening on a target skeleton (examples in Appendix~\ref{app:completion_examples}).
Using only one quarter of the input frames ($k{=}4$), the model's MPJPE is 5.7--8.5\% higher than with dense input (Table~\ref{tab:causal}).
Errors increase with longer gaps between observations.

\paragraph{Causal inference.}
Causal temporal attention allows motion capture from incoming frames.
The model trained with single-frame chunks incurs a modest accuracy loss relative to bidirectional CAST-B on Zoo and MObj Unseen (Table~\ref{tab:causal}).
The model trained with 4-frame chunks achieves comparable MPJPE and PA-MPJPE to bidirectional CAST-B across all three splits, at the cost of waiting for a complete chunk before prediction.

For live capture, we deploy the causal variant through a Blender add-on that applies the received skeletons to a rigged character as they arrive, allowing a performance to be inspected during recording.
Figure~\ref{fig:blender_addon} and Appendix~\ref{app:blender_addon} show the add-on interface and a live capture session.

\subsection{Inference efficiency}
Table~\ref{tab:efficiency} compares model size and inference efficiency on a single NVIDIA A100.
CAST-B uses fewer parameters than the evaluated baselines and processes longer temporal windows in less time, providing higher offline throughput under the reported configurations.
Scaling to CAST-L further improves motion accuracy with a moderate increase in inference cost, while retaining a throughput advantage over the baselines.
The causal variant further enables real-time online prediction across all tested skeletons without requiring future observations.
These results support both efficient offline motion capture and online use across different target skeletons.
Timings include the frozen image backbone and motion prediction, excluding video acquisition and mesh rendering.
CAST uses bf16 precision and \texttt{torch.compile}, with implementation optimizations detailed in Appendix~\ref{app:inference_optimization}.

\FloatBarrier
\section{Conclusion}
\label{sec:conclusion}

We presented CAST, a framework for recovering motion from monocular video on a supplied target skeleton, including when its topology differs from the subject in the video.
Our observability analysis characterizes which rotational components outgoing bone directions can constrain and derives closed-form corrections under exact, compatible directional evidence.
Building on this analysis, GRACE aligns composed rotations with predicted global bone directions through reliability-gated updates, propagating corrected orientations from parents to children.
CAST improves matched-rig capture on Zoo and MObj and cross-skeleton animation on the curated Mixamo benchmark over the evaluated baselines.
Ablations support the contribution of GRACE under both visual encoders, and the user study provides complementary evidence of animation quality.
Fine-tuned variants support key-frame motion completion and causal inference, with single-frame CAST-B exceeding 51 FPS in model inference on an NVIDIA A100.
Remaining limitations include rotational ambiguities unconstrained by bone directions and the inability to represent time-varying parent-local bone offsets (Appendix~\ref{sec:limitations}).
Together, these results demonstrate the value of combining learned local rotations with topology-dependent geometric correction for motion capture across skeletons.

\bibliographystyle{ACM-Reference-Format}
\bibliography{references}

\clearpage
\appendix

\section{Theoretical Analysis of GRACE}

\label{app:theory}

\subsection{Notation and forward kinematics}
\label{app:notation}
The superscript $0$ identifies the base local prediction $\widehat{\mathbf L}^0$.
The global orientation is denoted by $\widehat{\mathbf G}$ before correcting the current joint and by $\mathbf G$ after correction.
Following Section~\ref{sec:cast_gating}, GRACE uses $\widehat{\mathbf G}_{t,i}=\mathbf G_{t,p(i)}\widehat{\mathbf L}^{0}_{t,i}$ for non-root joints and $\widehat{\mathbf G}_{t,0}=\widehat{\mathbf L}^{0}_{t,0}$ for the root.
The fixed vector $\mathbf r_{ij}$ is the normalized offset of child $j$ in parent $i$'s local axes.
It has no frame index and is not the result of FK.
As in the main text, $\mathcal C(i)=\{j\mid p(j)=i\}$ denotes the children of joint $i$, and the directional analysis assumes $\lVert\mathbf o_j\rVert_2>0$.
For any consistent set of joint rotations and their composed global orientations $\mathbf G_{t,j}$, root-centered FK positions obey
\begin{equation}
\mathbf p^{\mathrm{FK}}_{t,0}=\mathbf 0,\qquad
\mathbf p^{\mathrm{FK}}_{t,j}=\mathbf p^{\mathrm{FK}}_{t,p(j)}+\mathbf G_{t,p(j)}\mathbf o_j,\quad j>0.
\label{eq:fk_positions}
\end{equation}
Thus the unit direction from $i$ to $j$ is $\mathbf G_{t,i}\mathbf r_{ij}$.
Updating joint $i$ changes its outgoing direction from $\widehat{\mathbf d}_{t,ij}=\widehat{\mathbf G}_{t,i}\mathbf r_{ij}$ to $\mathbf G_{t,i}\mathbf r_{ij}$.
Even in the rest pose, the direction obtained by FK is $\mathbf G_i^{\mathrm{rest}}\mathbf r_{ij}$, where $\mathbf G_i^{\mathrm{rest}}$ is obtained by composing the rest rotations along the tree.
The alignment target is the exact ground-truth direction $\mathbf d^{\mathrm{gt}}_{t,ij}$ in the ideal analysis of Section~\ref{sec:cast}, and the Bone Direction Head prediction $\mathbf d_{t,ij}$ in the learned correction of Section~\ref{sec:cast_gating}.
For supervision, annotated positions give $\mathbf d^{\mathrm{gt}}_{t,ij}=(\mathbf p^{\mathrm{gt}}_{t,j}-\mathbf p^{\mathrm{gt}}_{t,i})/\lVert\mathbf p^{\mathrm{gt}}_{t,j}-\mathbf p^{\mathrm{gt}}_{t,i}\rVert_2$.

\subsection{Rotational observability derivation}
\label{app:observability_derivation}
We derive the rotational constraints illustrated for the three joint types in Figure~\ref{fig:overview}(b).
Let $\widehat{\mathbf G}_{t,i}$ be a provisional global rotation and $\widehat{\mathbf d}_{t,ij}=\widehat{\mathbf G}_{t,i}\mathbf r_{ij}$ its outgoing direction.
A left multiplicative perturbation is
\begin{equation}
\mathbf G_{t,i}(\delta\boldsymbol\omega)=
\mathtt{exp}([\delta\boldsymbol\omega]_{\times})\widehat{\mathbf G}_{t,i}.
\end{equation}
Its first-order effect, Jacobian, and Jacobian Gram matrix are
\begin{equation}
\begin{aligned}
\delta\widehat{\mathbf d}_{t,ij}&=\delta\boldsymbol\omega\times\widehat{\mathbf d}_{t,ij},\\
\mathbf J_{t,ij}&=-[\widehat{\mathbf d}_{t,ij}]_\times,\\
\mathbf J_{t,ij}^{\mathsf T}\mathbf J_{t,ij}&=\mathbf I-\widehat{\mathbf d}_{t,ij}\widehat{\mathbf d}_{t,ij}^{\mathsf T}.
\end{aligned}
\label{eq:global_gram}
\end{equation}
Because $\widehat{\mathbf d}_{t,ij}=\widehat{\mathbf G}_{t,i}\mathbf r_{ij}$,
\begin{equation}
\mathbf I-\widehat{\mathbf d}_{t,ij}\widehat{\mathbf d}_{t,ij}^{\mathsf T}
=\widehat{\mathbf G}_{t,i}(\mathbf I-\mathbf r_{ij}\mathbf r_{ij}^{\mathsf T})\widehat{\mathbf G}_{t,i}^{\mathsf T}.
\label{eq:similarity}
\end{equation}
The operators formed from the composed directions $\widehat{\mathbf d}_{t,ij}$ and rest directions $\mathbf r_{ij}$ are orthogonally similar and have the same spectrum.

Using the child set $\mathcal C(i)$ defined in Appendix~\ref{app:notation} and $Z_i=|\mathcal C(i)|$, the normalized Gauss--Newton matrix is
\begin{equation}
\begin{aligned}
\mathbf A_i^{\mathrm{loc}}&=
\begin{cases}
\displaystyle \frac{1}{Z_i}\sum_{j\in\mathcal C(i)}
(\mathbf I-\mathbf r_{ij}\mathbf r_{ij}^{\mathsf T}), & Z_i>0,\\[6pt]
\mathbf 0_{3\times3}, & Z_i=0,
\end{cases}\\
\mathbf A^{\mathrm{glob}}_{t,i}&=\widehat{\mathbf G}_{t,i}\mathbf A_i^{\mathrm{loc}}\widehat{\mathbf G}_{t,i}^{\mathsf T}.
\end{aligned}
\label{eq:observability}
\end{equation}
For leaves, both matrices are zero.
For $Z_i>0$, $\mathtt{tr}(\mathbf A_i^{\mathrm{loc}})=2$, so the trace cannot distinguish different directional configurations.
All three eigenvalues, in ascending order, form the geometric descriptor $\boldsymbol\lambda_i$.

For $Z_i>0$ and any $\mathbf v\in\mathbb R^3$,
\begin{equation}
\mathbf v^{\mathsf T}\mathbf A_i^{\mathrm{loc}}\mathbf v
=\frac{1}{Z_i}\sum_{j\in\mathcal C(i)}
\lVert\mathbf r_{ij}\times\mathbf v\rVert_2^2.
\label{eq:observability_quadratic}
\end{equation}
Thus the null space consists precisely of vectors parallel to every rest direction.
A single direction gives eigenvalues $(0,1,1)$ and leaves axial twist unconstrained.
Collinear directions share that null space.
Two non-collinear directions have no common nonzero null vector, giving rank three.
A leaf has rank zero.
These statements characterize target geometry independently of the learned directions' accuracy.

\paragraph{Rank and numerical conditioning.}
For two equally weighted unit directions with angle $\theta$, the smallest eigenvalue of $\mathbf A_i^{\mathrm{loc}}$ is $(1-|\mathtt{cos}(\theta)|)/2$.
It is positive for non-collinear directions but tends to zero as they become parallel or antiparallel.
Near-collinear directions therefore remain in the rank-three case of Figure~\ref{fig:overview}(b), but can be sensitive to small directional errors.
They do not define a separate rank class or network branch.
The actual alignment fallback tests the spectrum of the confidence-weighted $\mathbf H$, not this eigenvalue directly (Appendix~\ref{app:numerics}).

\subsection{Accumulated rotation errors and ideal correction}
\label{app:error_cancellation}
We analyze the full ideal update in Section~\ref{sec:cast}, separating error propagation from the geometric conditions for its cancellation.
Throughout this subsection, the frame index $t$ is omitted, and exact directions are assumed to be compatible with the target rest geometry: $\mathbf d^{\mathrm{gt}}_{ij}=\mathbf G_i^{\mathrm{gt}}\mathbf r_{ij}$ for every edge, where the superscript $\mathrm{gt}$ denotes ground truth.

\paragraph{How local errors enter the global orientation.}
For a non-root joint, the true local rotation is $\mathbf L_i^{\mathrm{gt}}=(\mathbf G_{p(i)}^{\mathrm{gt}})^{\mathsf T}\mathbf G_i^{\mathrm{gt}}$.
Define the local prediction error and the corrected parent's global error by $\boldsymbol\Delta_i=\widehat{\mathbf L}_i^0(\mathbf L_i^{\mathrm{gt}})^{\mathsf T}$ and $\mathbf E_{p(i)}=\mathbf G_{p(i)}(\mathbf G_{p(i)}^{\mathrm{gt}})^{\mathsf T}$, respectively.
The provisional global error then obeys the exact identity
\begin{equation}
\begin{aligned}
\widehat{\mathbf E}_i
&=\widehat{\mathbf G}_i(\mathbf G_i^{\mathrm{gt}})^{\mathsf T}\\
&=\mathbf E_{p(i)}\mathbf G_{p(i)}^{\mathrm{gt}}
\boldsymbol\Delta_i(\mathbf G_{p(i)}^{\mathrm{gt}})^{\mathsf T}.
\end{aligned}
\label{eq:error_propagation}
\end{equation}
For the root, $\widehat{\mathbf E}_0=\widehat{\mathbf L}_0^0(\mathbf G_0^{\mathrm{gt}})^{\mathsf T}$.
Equation~\ref{eq:error_propagation} composes the inherited global error with the current local error conjugated by the true parent orientation $\mathbf G_{p(i)}^{\mathrm{gt}}$.
In particular, $\boldsymbol\Delta_i=\mathbf I$ gives $\widehat{\mathbf E}_i=\mathbf E_{p(i)}$: an accurate local prediction does not remove a residual parent error.
Repeated composition explains accumulation along the tree, but does not imply monotonic error growth, since rotation errors can also partially cancel.

\begin{proof}[Proof of Lemma~\ref{lem:ideal_correction}]
For fixed $\widehat{\mathbf G}_i\in\mathtt{SO}(3)$, the map $\mathbf C\mapsto\mathbf R=\mathbf C\widehat{\mathbf G}_i$ is a bijection on $\mathtt{SO}(3)$, with inverse $\mathbf C=\mathbf R\widehat{\mathbf G}_i^{\mathsf T}$.
\begin{equation}
\begin{aligned}
&\sum_{j\in\mathcal C(i)}\lVert\mathbf C\widehat{\mathbf d}_{ij}-\mathbf d^{\mathrm{gt}}_{ij}\rVert_2^2\\
&\quad=\sum_{j\in\mathcal C(i)}\lVert\mathbf R\mathbf r_{ij}-\mathbf d^{\mathrm{gt}}_{ij}\rVert_2^2,
\qquad \mathbf R=\mathbf C\widehat{\mathbf G}_i.
\end{aligned}
\label{eq:alignment_reparameterization}
\end{equation}
The bijection therefore maps the correction minimizers to the global-rotation minimizers without assuming uniqueness.
The candidate $\mathbf C_0=\mathbf G_i^{\mathrm{gt}}\widehat{\mathbf G}_i^{\mathsf T}$ maps every provisional direction to its exact target, attaining zero cost.
Any other minimizer $\mathbf R$ must consequently satisfy $\mathbf R\mathbf r_{ij}=\mathbf G_i^{\mathrm{gt}}\mathbf r_{ij}$ for all children.
By Appendix~\ref{app:observability_derivation}, $\operatorname{rank}(\mathbf A_i^{\mathrm{loc}})=3$ if and only if at least two rest directions are non-collinear.
In this case, the proper rotation $(\mathbf G_i^{\mathrm{gt}})^{\mathsf T}\mathbf R$ fixes both directions and their cross product.
These vectors span $\mathbb R^3$, so this rotation is identity.
Thus the solution is unique and
\begin{equation}
\mathbf C_i^{\mathrm{ideal}}=\mathbf C_0=\widehat{\mathbf E}_i^{\mathsf T},
\qquad
\mathbf E_i=\mathbf C_i^{\mathrm{ideal}}\widehat{\mathbf E}_i=\mathbf I.
\label{eq:error_cancellation_proof}
\end{equation}
Restoring the frame index, this gives
\begin{equation}
\mathbf C^{\mathrm{ideal}}_{t,i}
=\mathbf G^{\mathrm{gt}}_{t,i}\widehat{\mathbf G}_{t,i}^{\mathsf T},
\qquad
\mathbf G_{t,i}=\mathbf G^{\mathrm{gt}}_{t,i}.
\label{eq:ideal_cancellation}
\end{equation}
Conversely, if $\operatorname{rank}(\mathbf A_i^{\mathrm{loc}})=2$, all outgoing rest directions are collinear.
Composing $\mathbf G_i^{\mathrm{gt}}$ on the left with any rotation about their common ground-truth axis leaves every target direction unchanged, giving multiple optimal global rotations and, by the bijection above, multiple correction minimizers.
If the rank is zero, there are no directional constraints and every rotation minimizes the objective.
Thus the ideal correction is unique if and only if the rank is three.
\end{proof}

This result holds for arbitrary initial rotation errors, not merely small perturbations.
It concerns the aggregate error in the global orientation and does not require recovering the individual ancestor errors that produced it.

\paragraph{What a single axis cannot determine.}
If all rest directions are collinear, write them as $\mathbf r_{ij}=\pm\mathbf r$ and let $\mathbf d^{\mathrm{gt}}=\mathbf G_i^{\mathrm{gt}}\mathbf r$.
The complete set of optimal global rotations is
\begin{equation}
\begin{aligned}
\mathbf G_i(\phi)&=\exp(\phi[\mathbf d^{\mathrm{gt}}]_\times)\mathbf G_i^{\mathrm{gt}},\\
\mathbf E_i(\phi)&=\exp(\phi[\mathbf d^{\mathrm{gt}}]_\times),\quad \phi\in[0,2\pi).
\end{aligned}
\label{eq:twist_ambiguity}
\end{equation}
Here $[\mathbf d^{\mathrm{gt}}]_\times\mathbf x=\mathbf d^{\mathrm{gt}}\times\mathbf x$.
All these rotations align every supported bone, but differ in their twist about the target axis.
Equivalently, the residual minimizers are $\exp(\phi[\mathbf d^{\mathrm{gt}}]_\times)\mathbf C_{\min}$, where $\mathbf C_{\min}$ is a minimum-angle rotation taking $\widehat{\mathbf G}_i\mathbf r$ to $\mathbf d^{\mathrm{gt}}$.
The minimum-angle representative resolves the directional mismatch while leaving the true twist undetermined.
It also provides no guarantee that a twist angle measured in an arbitrary fixed-coordinate decomposition remains unchanged.
For a leaf, the objective is constant, and identity is chosen as the residual correction.

\paragraph{Local rotations, positions, and practical limits.}
Restoring the frame index, the output local rotations are
\begin{equation}
\mathbf L_{t,i}=
\begin{cases}
\mathbf G_{t,0},&i=0,\\
\mathbf G_{t,p(i)}^{\mathsf T}\mathbf G_{t,i},&i>0.
\end{cases}
\label{eq:corrected_local}
\end{equation}
They reproduce the corrected global rotations under forward kinematics.
Recovering $\mathbf L_{t,i}^{\mathrm{gt}}$ requires both the joint and its parent to have correct global orientations.
An exact $\mathbf G_{t,i}$ is therefore insufficient when the corrected parent remains inaccurate.
Likewise, correcting a joint's orientation changes its outgoing bone directions, but does not move that joint or undo an already accumulated parent-position error.
If all bone directions and lengths are exact, forward kinematics recovers root-relative positions even when some axial twists remain ambiguous.
Recovering absolute positions additionally requires the correct root translation.

The cancellation result assumes exact compatible directions and a full ideal update.
Noisy or geometrically incompatible direction predictions, partial gated updates, and numerical regularization do not satisfy these assumptions in general.
The result explains the correction mechanism, not a guarantee of monotonic true-error reduction in the learned system or elimination of temporal drift.

\subsection{Geometric alignment details}
\label{app:alignment}
Figure~\ref{fig:overview}(c) illustrates the single-direction and multi-direction alignment detailed below.
Alignment produces a geometric proposal from provisional directions that incorporate the corrected parent orientation.
The reliability gate then controls how much of this proposal is applied, as described in Section~\ref{sec:cast_gating}.

\paragraph{Closed-form alignment.}
\begin{proof}[Proof of Eq.~\ref{eq:ideal_closed_form}]
Using $\mathbf H^{\mathrm{ideal}}_{t,i}$ and its SVD from Eq.~\ref{eq:ideal_cross_covariance}, the unit-length directions give an objective of $2Z_i-2\operatorname{tr}(\mathbf C(\mathbf H^{\mathrm{ideal}}_{t,i})^{\mathsf T})$, where $Z_i=|\mathcal C(i)|$.
Write $\mathbf Q=\mathbf U_{t,i}^{\mathsf T}\mathbf C\mathbf V_{t,i}$ and $\delta=\det(\mathbf U_{t,i}\mathbf V_{t,i}^{\mathsf T})\in\{-1,1\}$.
Then $\mathbf Q$ is orthogonal with determinant $\delta$, and the trace to maximize is $\operatorname{tr}(\mathbf Q\boldsymbol\Sigma_{t,i})$.
For ordered singular values $\sigma_1\geq\sigma_2\geq\sigma_3\geq0$, each $Q_{kk}\leq1$ gives the upper bound $\sigma_1+\sigma_2+\sigma_3$ when $\delta=1$.
When $\delta=-1$, an orthogonal $3\times3$ matrix satisfies $\operatorname{tr}(\mathbf Q)\leq1$, so
\[
\operatorname{tr}(\mathbf Q\boldsymbol\Sigma_{t,i})
=\sum_{k=1}^3\sigma_k-\sum_{k=1}^3\sigma_k(1-Q_{kk})
\leq\sigma_1+\sigma_2-\sigma_3.
\]
Both bounds are attained by $\mathbf Q=\operatorname{diag}(1,1,\delta)$.
Substituting $\mathbf C=\mathbf U_{t,i}\mathbf Q\mathbf V_{t,i}^{\mathsf T}$ yields Eq.~\ref{eq:ideal_closed_form}, whose determinant is one.
\end{proof}
Uniqueness under compatible directions follows from Lemma~\ref{lem:ideal_correction}.

\paragraph{Observability rank is not cross-covariance rank.}
For $Z_i>0$, under the compatibility assumption of Appendix~\ref{app:error_cancellation},
\begin{equation}
\begin{aligned}
\mathbf B_i&=\sum_{j\in\mathcal C(i)}\mathbf r_{ij}\mathbf r_{ij}^{\mathsf T},\\
\mathbf H^{\mathrm{ideal}}_{t,i}&=\mathbf G^{\mathrm{gt}}_{t,i}\mathbf B_i\widehat{\mathbf G}_{t,i}^{\mathsf T},\\
\mathbf A_i^{\mathrm{loc}}&=\mathbf I-\mathbf B_i/Z_i.
\end{aligned}
\label{eq:alignment_rank_relation}
\end{equation}
Consequently, $\operatorname{rank}(\mathbf H^{\mathrm{ideal}}_{t,i})=\operatorname{rank}(\mathbf B_i)$, whereas $\operatorname{rank}(\mathbf A_i^{\mathrm{loc}})=3$ if and only if $\operatorname{rank}(\mathbf B_i)\geq2$.
In particular, two non-collinear directions uniquely determine a proper rotation even though $\mathbf H^{\mathrm{ideal}}_{t,i}$ then has rank two.
For example, with rest directions $\mathbf e_1,\mathbf e_2$ and identity global rotations, $\mathbf H^{\mathrm{ideal}}=\operatorname{diag}(1,1,0)$ but $\mathbf A^{\mathrm{loc}}=\operatorname{diag}(1/2,1/2,1)$.
Applying the finite-rotation argument in Appendix~\ref{app:error_cancellation} to the initial composition $\widehat{\mathbf G}_{t,i}$ establishes uniqueness without requiring full rank of $\mathbf H^{\mathrm{ideal}}$.
For noisy or incompatible directions, the factorization in Eq.~\ref{eq:alignment_rank_relation} need not hold, and rest-geometry rank alone does not guarantee uniqueness of the alignment optimum.

\paragraph{Minimum-angle correction for collinear support.}
When $\mathbf A_i^{\mathrm{loc}}$ has rank two and the ideal directions are compatible, let $j^\star$ be a selected child bone in $\mathcal C(i)$.
Following Section~\ref{sec:cast}, $\widehat{\mathbf d}_{t,ij}=\widehat{\mathbf G}_{t,i}\mathbf r_{ij}$ denotes the composed direction and $\mathbf d^{\mathrm{gt}}_{t,ij}$ its ground-truth counterpart.
For non-coincident and non-antipodal directions, define
\begin{equation}
\begin{aligned}
\theta_{t,i}&=\mathtt{atan2}\!\left(
\lVert\widehat{\mathbf d}_{t,ij^\star}\times\mathbf d^{\mathrm{gt}}_{t,ij^\star}\rVert_2,\,
\widehat{\mathbf d}_{t,ij^\star}^{\mathsf T}\mathbf d^{\mathrm{gt}}_{t,ij^\star}\right),\\
\mathbf n_{t,i}&=\frac{\widehat{\mathbf d}_{t,ij^\star}\times\mathbf d^{\mathrm{gt}}_{t,ij^\star}}
{\lVert\widehat{\mathbf d}_{t,ij^\star}\times\mathbf d^{\mathrm{gt}}_{t,ij^\star}\rVert_2},\\
\mathbf C_{\min,t,i}&=\exp\!\left(\theta_{t,i}[\mathbf n_{t,i}]_\times\right).
\end{aligned}
\label{eq:rank_two_correction}
\end{equation}
Here $\mathbf n_{t,i}$ is the unit rotation axis, $[\mathbf n_{t,i}]_\times$ is the skew-symmetric matrix satisfying $[\mathbf n_{t,i}]_\times\mathbf x=\mathbf n_{t,i}\times\mathbf x$, and $\exp$ denotes the matrix exponential.
We select $\mathbf C^{\mathrm{ideal}}_{t,i}=\mathbf C_{\min,t,i}$ in this case.
Coincident directions return identity.
For antipodal directions, every $\pi$ rotation about an axis orthogonal to the provisional direction has the minimum angle.
We choose a deterministic axis to select one of these solutions.
This convention selects a representative of the twist-ambiguous solution set in Eq.~\ref{eq:twist_ambiguity}.

\paragraph{Connection to predicted-direction alignment.}
For multiple predicted directions, the confidence-weighted matrix is
\begin{equation}
\mathbf H_{t,i}=\sum_{j\in\mathcal C(i)}c_{t,ij}\mathbf d_{t,ij}\widehat{\mathbf d}_{t,ij}^{\mathsf T}.
\end{equation}
If $\mathbf H_{t,i}=\mathbf U\bm{\Sigma}\mathbf V^{\mathsf T}$, a minimizer of the unregularized weighted objective is
\begin{equation}
\mathbf C^*_{t,i}=\mathbf U\mathtt{diag}(1,1,\mathtt{det}(\mathbf U\mathbf V^{\mathsf T}))\mathbf V^{\mathsf T}.
\end{equation}
The implementation tests the unregularized singular values before selecting multi-direction alignment or a strongest-direction fallback to compute $\mathbf C^*_{t,i}$, which is then gated as in Section~\ref{sec:cast_gating}.
The matrix perturbation used to differentiate the alignment and the swing tolerances are specified in Appendix~\ref{app:numerics}.

\section{Architecture and Implementation Details}

\label{app:implementation}

\subsection{Configured architecture}
\label{app:architecture}
The prediction backbone adapts MoCapAnything~V2~\citep{gong2026mocapanythingv2}.
It constructs topology-conditioned joint queries from the target rest geometry, repeats them over frames, and processes them with image cross-attention, graph attention, and per-joint temporal attention.
The final block's three outputs form the shared features $\mathbf Q$.
A skeleton-conditioned rotation decoder maps these features to initial local rotations, while an auxiliary pose head predicts root-relative positions for training supervision.

The frozen image encoder is DINOv3 ViT-L/16~\citep{simoni2025dinov3}, producing 1024-dimensional patch and CLS tokens from $256\times256$ inputs.
In CAST-B (base), the encoder and rotation decoder each have 8 blocks, width 256, 8 attention heads, and dropout 0.05.
CAST-L (large) uses 16 blocks, width 384, 12 attention heads, and dropout 0.05.
We use $D$ to denote the corresponding feature width.
Both use full-window temporal attention with RoPE.
Their static rest-pose encoders each have six layers.
The rest-pose encoder for the video branch projects an eight-frequency positional encoding of rest positions, then applies graph and joint self-attention with separate feed-forward residuals.
The rest-pose encoder for the rotation branch projects the concatenated three-dimensional offsets and six-dimensional rest rotations and applies graph/feed-forward blocks.
No text-semantic branch, reference-image fusion, pose encoder, or reference-memory encoder is constructed in this configuration.
We normalize rest positions and offsets at the input boundary without supplying the scale as an additional feature.

The rest-pose-conditioned queries have shape $J\times D$ and are repeated over $T$ frames.
In block $\ell$, the outputs after its image, graph, and temporal updates are denoted $\mathbf V^{(\ell)}$, $\mathbf S^{(\ell)}$, and $\mathbf Z^{(\ell)}$, each with shape $T\times J\times D$.
Each graph-attention call forms a dense $J\times J$ score matrix.
The hop matrix stores the shortest-path distance for every pair, while the edge matrix marks self and direct parent--child pairs and assigns a generic relation to the remaining pairs.
With one-based numbering, odd-numbered graph blocks apply the ancestor mask, retaining each query's self token and ancestor chain.
Even-numbered blocks disable this tree mask, allowing attention to every non-padded joint.
The ancestor, descendant, and other-branch labels in the plots therefore describe the static tree relation between a key joint and the query, rather than an additional multi-layer propagation step.
The joint and frame masks exclude padding.
Inputs use the shared rest-size normalization.
The encoder has $N_e$ blocks and the rotation decoder has $N_r$ blocks. Feature concatenation and the rotation-decoder input projection are
\begin{equation}
\begin{aligned}
\mathbf Q&=[\mathbf V^{(N_e)}\Vert\mathbf S^{(N_e)}\Vert\mathbf Z^{(N_e)}]
\in\mathbb R^{T\times J\times3D},\\
\mathbf Y^{(0)}&=\mathbf W_q\mathtt{LN}(\mathbf Q)+\mathbf b_q.
\end{aligned}
\label{eq:base_features}
\end{equation}
Here concatenation is along the feature dimension, and the projection maps each $3D$-dimensional feature to width $D$.
Writing the rest-pose conditioning features for the rotation decoder as $\mathbf b_j$, the decoder computes $\mathbf Y^{(k)}=\mathtt{Dec}_k(\mathbf Y^{(k-1)},\mathbf b;\mathcal S)$ and $\mathbf u=\mathtt{Head}_{\mathrm{rot}}(\mathbf Y^{(N_r)})$.

Each rotation-decoder block applies FiLM conditioning~\citep{perez2018film}, per-joint temporal self-attention, an FFN, graph attention within each frame, and a second FFN, in that order.
Its graph attention follows the same alternating ancestor-mask schedule: with one-based numbering, an odd-numbered block can assign nonzero weight only to the query itself or an ancestor, whereas an even-numbered block directly attends to all non-padded joints.
Both attention--FFN pairs use pre-normalization and residual connections.
Reference-memory cross-attention is disabled in this configuration.
The output head predicts a 6D residual $\mathbf u_{t,j}$~\citep{zhou2019rot6d}, composed with the rest-pose rotation as
\begin{equation}
\widehat{\mathbf L}^{0}_{t,j}=\mathbf L^{\mathrm{rest}}_j\mathtt{GS}(\mathbf u_{t,j}),
\label{eq:base_rotation}
\end{equation}
where $\mathtt{GS}$ denotes Gram--Schmidt rotation decoding.

The rotation interface is $\mathtt{LN}$ followed by a $3D\to D$ linear projection.
The auxiliary position head is $\mathtt{LN}$, $3D\to128$ linear, $\mathtt{GELU}$, and $128\to3$ linear, followed by root subtraction.
Its gradients are not detached from the video features.
The base output assigns rest rotations to dataset-marked static joints.
Because the configured GRACE path does not reapply this static mask, it can still correct static-marked non-leaf joints.
Leaves retain their base local rotations.

The direction head described in Section~\ref{sec:cast_gating} applies $\mathtt{LN}$ to the child and parent features, followed by separate $D\to D$ projections.
For each bone, the normalized length is $\ell_{ij}=\lVert\mathbf o_j\rVert_2/\bar\ell_{\mathcal S}$, where $\bar\ell_{\mathcal S}$ is the mean rest-bone length of skeleton $\mathcal S$.
The four-dimensional rest descriptor $[\mathbf r_{ij},\mathtt{log}(1+\ell_{ij})]$ passes through a $4\to D\to D$ $\mathtt{MLP}$ with $\mathtt{SiLU}$.
The three results are added, then processed by $\mathtt{LN}$, $\mathtt{SiLU}$, a $D\to D$ linear layer, and $\mathtt{SiLU}$.
Separate linear heads predict a direction residual and a reliability logit.
The direction branch adds its residual to the unit rest direction and normalizes the sum to unit length.
The reliability branch maps its logit to confidence through a $\mathtt{sigmoid}$.
Specifically, $\mathbf d_{t,ij}=\mathtt{normalize}(\mathbf r_{ij}+\boldsymbol\eta_{t,ij})$, where $\boldsymbol\eta_{t,ij}\in\mathbb R^3$ is the learned residual in the three output channels.
This parameterization uses the local components $\mathbf r_{ij}$ as a fixed numerical bias, without transforming them from parent-local to camera coordinates during the addition.
The direction-residual head is zero-initialized, and the reliability bias is $-4$.
Direction normalization falls back to the rest direction when the predicted residual cancels it, i.e. when $\lVert\mathbf r_{ij}+\boldsymbol\eta_{t,ij}\rVert_2<\varepsilon_o$ (threshold in Appendix~\ref{app:numerics}).

\subsection{Coordinate conventions}
\label{app:coordinates}
The training cache expresses joint positions and root rotations in the OpenCV camera coordinate system, while non-root rotations are relative to their parents.
Position supervision is root-centered, and bone direction targets are obtained by normalizing parent--child position differences in camera coordinates.
The Bone Direction Head predicts directions in camera coordinates directly from video features.
Since the predicted root rotation is already expressed in camera coordinates, composition along the kinematic tree yields global orientations and bone directions in camera coordinates.
GRACE computes each provisional direction as $\widehat{\mathbf d}_{t,ij}=\widehat{\mathbf G}_{t,i}\mathbf r_{ij}$ and aligns it directly with the predicted direction, without applying a world-to-camera transform.
Camera extrinsics retained with the training data are metadata and are excluded from the model inputs.
Unlabeled-video inference requires no camera intrinsics or extrinsics.
It outputs root rotations in camera coordinates and parent-local non-root rotations, from which FK produces root-centered positions in camera coordinates.
Here, \emph{global} refers to the orientation obtained by composition along the skeleton hierarchy, rather than an orientation in world coordinates.
Alignment requires a common coordinate system for $\widehat{\mathbf d}_{t,ij}$ and $\mathbf d_{t,ij}$.
The geometric observability and ideal error-cancellation results hold independently of the choice of that coordinate system.

\subsection{Numerical implementation}
\label{app:numerics}
Direction normalization uses $\varepsilon_o=10^{-6}$.
For the numerical thresholds below, let $\varepsilon=10^{-6}$ and define the number of outgoing confidences above this threshold as $N_{t,i}=\sum_{j\in\mathcal C(i)}\mathbf 1[c_{t,ij}>\varepsilon]$.
The aggregated outgoing reliability $\bar c_{t,i}$ and confidence-weighted angular disagreement $\delta_{t,i}$ are
\begingroup
\allowdisplaybreaks[4]
\begin{align}
\bar c_{t,i}&=\mathtt{clip}\!\left(\frac{\sum_{j\in\mathcal C(i)}c_{t,ij}}{\max(1,N_{t,i})},0,1\right),\nonumber\\
\delta_{t,i}&=\frac{\sum_{j\in\mathcal C(i)}c_{t,ij}\vartheta_{t,ij}}
{\max(\varepsilon,\sum_{j\in\mathcal C(i)}c_{t,ij})},\label{eq:gate_statistics}\\
\vartheta_{t,ij}&=\mathtt{atan2}\!\left(\lVert\widehat{\mathbf d}_{t,ij}\times\mathbf d_{t,ij}\rVert_2,
\widehat{\mathbf d}_{t,ij}^{\mathsf T}\mathbf d_{t,ij}\right).\nonumber
\end{align}
\endgroup
The gate uses the child count and the spectrum of the rest-direction matrix as its geometric descriptor $\mathbf a_i$.
A sigmoid $\mathtt{MLP}$ takes the joint features $\mathbf Q_{t,i}$, reliability $\bar c_{t,i}$, disagreement $\delta_{t,i}$, and descriptor $\mathbf a_i$ to predict a learned gate factor $\gamma_{t,i}\in[0,1]$.
The update strength in Eq.~\ref{eq:lie_gate} is $g_{t,i}=\bar c_{t,i}\gamma_{t,i}$.
The gate $\mathtt{MLP}$ has two 256-wide $\mathtt{SiLU}$ hidden layers and output bias initialized to $-4$.
This initialization limits early updates while the direction head is being learned.
Multi-direction alignment requires at least two weights above $\varepsilon$, $\sigma_1>\varepsilon$, and $\sigma_2>10^{-4}\sigma_1$ for the unregularized alignment matrix.
Otherwise it uses the strongest-direction swing, or identity when total weight is at most $\varepsilon$.

Geometric correction operations are evaluated in float32.
Proposal selection uses child count and the confidence-weighted alignment matrix, not a separate rank-two/rank-three hard mask.
Gradients through gate confidence and disagreement are enabled.
The singular-value check uses detached inputs to select the numerical proposal branch.
The corrected-motion path remains differentiable.

Multi-child conditioning uses the singular values of the detached, unregularized $\mathbf H$.
If the relative-singular-value test fails, the solver selects the strongest-direction swing without forcing the gate off.
The minimum-swing implementation treats $\lVert\widehat{\mathbf d}\times\mathbf d\rVert^2\leq10^{-6}$ as parallel: it returns identity for nonnegative dot product and a $\pi$ rotation about a deterministic orthogonal axis otherwise.
Thus the ideal shortest-rotation rule has a small numerical angular tolerance.

In Eq.~\ref{eq:lie_gate}, $\mathtt{log}$ denotes a selected real skew-symmetric rotation logarithm with angle in $[0,\pi]$ and $\mathtt{log}(\mathbf I)=\mathbf 0$.
Matrix proposals are converted to axis--angle form through a unit quaternion with nonnegative scalar part.
At exactly $\pi$, the conversion chooses the axis sign so that its largest-magnitude component is positive, resolving ties in $x,y,z$ order.
For an antipodal swing, we choose the coordinate basis vector least aligned with the source direction, resolving ties in $x,y,z$ order.
We then project it onto the plane perpendicular to the source direction and normalize it to obtain the rotation axis.
The direct axis--angle swing path retains this selected axis when applying the gate.
In either path, gating scales the selected rotation vector by $g_{t,i}$ before exponentiation, fixing the partial update at the $\pi$ boundary.

For the SVD used to construct the rotation, the implementation applies
\begin{equation}
\mathbf H_{\mathrm{num}}=\mathbf H+
4\varepsilon\max(\lVert\mathbf H\rVert_F,\varepsilon)
\mathtt{diag}(1,2,4),
\qquad \varepsilon=10^{-6}.
\end{equation}
This small perturbation reduces repeated-singular-value instability during differentiation.
It does not enter the observability test.
Determinant correction ensures that $\mathbf C^*_{t,i}$ is a proper rotation.
Degenerate proposals use the strongest-direction swing or identity.

Same-depth joints are processed together, with a sequential loop only over tree depth.
Topology preparation currently constructs dense joint-pair metadata ($O(J^2)$ storage per rig).
Per-frame direction processing uses $O(JK_{\max})$ padded child slots, where $K_{\max}$ is the maximum retained child count in the batch.
The batched implementation constructs $3\times3$ SVD candidates even for single-child rows before selecting swing or multi-child alignment.
Accordingly, this implementation is not SVD-free, and its padded runtime should not be equated with an unpadded $O(|\mathcal E|)$ bound.
The count feature's saturation at four does not truncate children used for alignment.
The gate $\mathtt{MLP}$ is evaluated in chunks of at most 4,096 tokens.

\subsection{Training penalties and reductions}
\label{app:training}
The auxiliary pose and rotation objectives in Eq.~\ref{eq:objective} are
\begin{equation}
\mathcal L_{\mathrm{pose}}=
\lambda_{\mathrm{pos}}\mathcal L_{\mathrm{pos}}
+\lambda_{\mathrm{vel}}\mathcal L_{\mathrm{vel}},
\label{eq:pose_objective}
\end{equation}
\begin{equation}
\mathcal L_{\mathrm{rotation}}=
\lambda_{\mathrm{rot}}\mathcal L_{\mathrm{rot}}
+\lambda_{\mathrm{rotv}}\mathcal L_{\mathrm{rot\text{-}vel}}
+\lambda_{\mathrm{root}}\mathcal L_{\mathrm{root}}.
\label{eq:rotation_objective}
\end{equation}
The auxiliary terms $\mathcal L_{\mathrm{pos}}$ and $\mathcal L_{\mathrm{vel}}$ supervise joint positions and adjacent-frame displacements.
The rotation terms $\mathcal L_{\mathrm{rot}}$, $\mathcal L_{\mathrm{rot\text{-}vel}}$, and $\mathcal L_{\mathrm{root}}$ supervise corrected rotations, their adjacent-frame changes, and root orientation, respectively.
The implementation uses the loss weights
$(\lambda_{\mathrm{pos}},\lambda_{\mathrm{vel}},\lambda_{\mathrm{rot}},\lambda_{\mathrm{rotv}},\lambda_{\mathrm{FK}},\lambda_{\mathrm{bone}},\lambda_{\mathrm{root}})=(6,80,1,5,6,1,0.1)$.

The expectations in Eqs.~\ref{eq:direction_loss} and~\ref{eq:reliability_target} cover frame--edge entries, with per-sample reductions specified below.
The fixed decay scale in Eq.~\ref{eq:reliability_target} is $\tau=\pi/9$ radians: errors of $20^\circ$ and $40^\circ$ give reliability targets $e^{-1}$ and $e^{-2}$.
This is a supervision scale, not a probability of correctness.
Empty directional support contributes zero.

The position, adjacent-frame displacement, and FK losses use coordinate-wise $\mathtt{SmoothL1}$ with transition $\beta=0.075$ in the training-normalized coordinates, averaged over coordinates and valid non-root joints/frames.
FK targets are annotated root-relative positions rather than positions regenerated from target rotations.
All rotation penalties use $\mathtt{SmoothL1}$ with transition one radian on the geodesic angle, evaluated with a $\mathtt{atan2}$ formula.
Rotation-change loss compares $\mathbf L_{t,j}\mathbf L_{t-1,j}^{\mathsf T}$ with its annotated counterpart.
Rotation supervision applies to the final corrected rotations, with no separate loss on the uncorrected base predictions in this configuration.

The per-frame rotation loss is motion-weighted.
Let $v^{\mathrm{gt}}_{t,j}=\mathtt{dist}_{\mathtt{SO}(3)}(\mathbf L^{\mathrm{gt}}_{t,j},\mathbf L^{\mathrm{gt}}_{t-1,j})$ for valid adjacent frames.
A frame's detached weight is the maximum of its two adjacent speeds, with invalid pairs contributing zero.
At a sequence endpoint, we use the sole available adjacent speed.
The weighted error is normalized by the sum of weights within each sample.
The root-orientation term $\lambda_{\mathrm{root}}\mathcal L_{\mathrm{root}}$ in Eq.~\ref{eq:rotation_objective} uses uniform validity weights.
Rotation losses include the root and valid static-marked joints.
Entries with zero speed have zero weight in the motion-weighted term.
All losses are averaged equally across samples with nonempty support (positive total weight for weighted terms), with zero returned for empty support.
Direction and reliability losses use the same per-sample reduction without motion weighting.
The $1^\circ$ directional-motion threshold applies only to diagnostics and does not mask these losses.
The $\mathtt{SmoothL1}$ transition for reliability is one.

Training uses 81-frame clips, AdamW with learning rate $3\times10^{-4}$, zero weight decay, a 5\% warmup, cosine decay to $3\times10^{-6}$, a 6,000-step budget, seed 42, bfloat16 mixed precision, and gradient-norm clipping at one.

\subsection{Inference implementation and optimization}
\label{app:inference_optimization}
\paragraph{Prediction heads and geometric correction.}
Inference skips the auxiliary position head used for training supervision while retaining the bone-direction and reliability predictions required by GRACE.
For a fixed target skeleton, the optimized CUDA path processes joints by tree depth, fusing parent-rotation composition, directional alignment, learned gating, rotation blending, and local/global rotation writeback into one kernel launch per depth.
Joints at the same depth are processed in parallel, while successive depths preserve the dependency on corrected parent orientations.
Cached topology indices restrict each launch to the current depth, avoiding repeated processing of the full joint set and reducing kernel-launch overhead.
Supported fixed-topology inference configurations use this fused path, with the general implementation available as a fallback.

\paragraph{Attention backends.}
The implementation uses the variable-length FlashAttention interface for supported image cross-attention and full-window temporal-attention calls on CUDA with half-precision or bf16 tensors.
Valid tokens are packed to exclude padding, and standard PyTorch attention provides a fallback when this path is unavailable.
Graph attention uses PyTorch scaled dot-product attention (SDPA) with its learned relation biases and topology masks.
Temporal attention for chunk sizes greater than one also uses SDPA with an explicit chunk-causal mask, allowing attention within the current chunk and to preceding chunks.
Thus, the attention backend depends on the operator and its mask rather than using FlashAttention uniformly throughout the model.

\paragraph{Streaming execution.}
Single-frame causal inference maintains a separate temporal key--value cache for each encoder and decoder layer.
Only the incoming frame's projections and rotary embeddings are computed, and SDPA attends to the cached keys and values.
This cache path handles single-frame chunks, while larger chunks use the separate chunk-causal execution path.
The streaming implementation warms up the cache and compilation before using the compiled steady-state path, avoiding recompilation as the cache initially fills.
The latency measurements in Table~\ref{tab:efficiency} include the frozen image encoder and model computation, but exclude video acquisition and mesh rendering.

\section{Blender Add-on}
\label{app:blender_addon}

For interactive use we drive a rigged character in Blender from the causal variant through an add-on that talks to a running CAST service.
The add-on imports the target character as an FBX armature and receives the posed skeleton of the same joint count over the network, so the incoming motion maps onto the armature without a separate retargeting step and the rig follows the performance live in the viewport.
Its panel groups the runtime, connection, and transfer controls together with a status log that reports the imported armature and every skeleton it receives; the deployment in our setup is configured for 24 FPS input and uses a pose port with a separate control port (Figure~\ref{fig:blender_addon}(a)).
Figure~\ref{fig:blender_addon}(b) shows a live capture session, where the performer is filmed while the add-on drives the character in real time, so a take can be inspected while it is being recorded.

\begin{figure*}[!t]
\centering
\includegraphics[width=\linewidth]{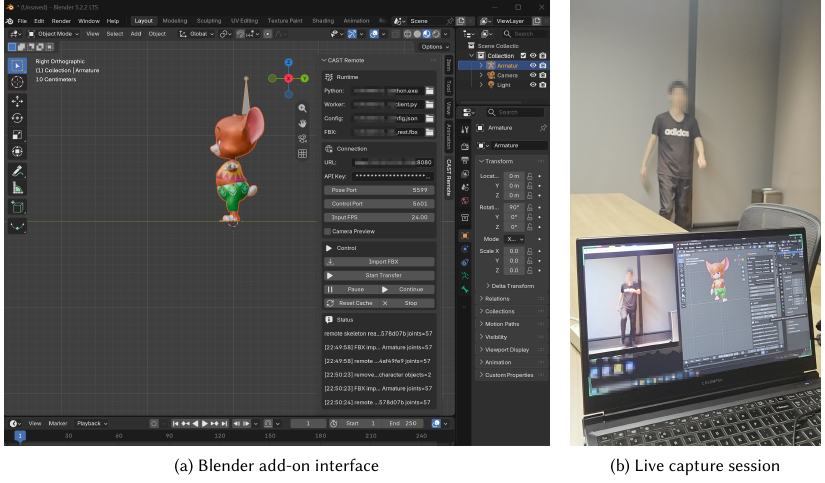}
\caption{\textbf{The Blender add-on for live capture.} (a) The add-on panel: runtime paths, service URL and API key, pose and control ports, transfer controls, and a status log. (b) A live capture session: the performer is recorded while the add-on drives the rigged character in real time.}
\label{fig:blender_addon}
\end{figure*}

\section{Evaluation Details}

\label{app:protocol}

\subsection{Data and split sizes}
\label{app:data}
Table~\ref{tab:data_scale} lists the processed collections used in this work.
A \emph{rig record} is a preprocessing identifier, not a graph-isomorphism class: asset variants and coordinate canonicalization can produce multiple records for the same structural pattern.
Motion counts refer to underlying animation clips.
Following the protocol in the main text, we split Zoo's 1,163 motions into 95\% for training and 5\% for testing.
For MObjaverse, we preserve the Train, Seen, and Unseen split assignments released with TopoCap, whose original lists contain 17,976, 469, and 469 clips, respectively.
During preprocessing, we exclude 21 training clips, one Seen clip, and one Unseen clip whose rendered meshes exhibit invalid geometry.
The retained splits therefore contain 17,955, 468, and 468 clips (18,891 in total), and this same quality-filtered subset is used for all reported MObj evaluations.
The Zoo row describes the collection before splitting, while the MObj rows describe these retained subsets of the official splits.
Matched-rig results are reported in Table~\ref{tab:combined}.

\begin{table*}[!t]
\caption{Statistics of the processed collections.
Zoo statistics describe the collection before the 95\%/5\% training/test split.
MObj statistics follow the official Train, Seen, and Unseen assignments after rendering-quality filtering.}
\label{tab:data_scale}
\begin{center}
\begin{tabular*}{\linewidth}{@{\extracolsep{\fill}}lccc@{}}\toprule
Collection/split & Rig records & Motion clips & Frame records\\\midrule
Zoo & 137 & 1,163 & 1,396,500\\
MObj Train & 12,723 & 17,955 & 1,905,136\\
MObj Seen & 440 & 468 & 39,261\\
MObj Unseen & 468 & 468 & 64,279\\\bottomrule
\end{tabular*}
\end{center}
\end{table*}

Truebones Zoo is the original source of the animal assets~\citep{truebones2022}.
Its assets remain subject to the applicable vendor license and are excluded from the planned release of our processed data.
MObjaverse is derived from Objaverse-XL and introduced with TopoCap~\citep{pu2026topocap,deitke2023objaversexl}.
Our preprocessing records and quality-filtered splits are distinguished from the original datasets so that their counts are not interpreted as new source-dataset statistics.

\subsection{Metric definitions}
\label{app:metrics}
Let $\mathbf x_{t,j}$ and $\mathbf x^{\mathrm{gt}}_{t,j}$ denote predicted and ground-truth joint positions in a common coordinate system, for a sequence with $T$ frames.
For the correspondence-based metrics, $J$ denotes the number of corresponding joints, indexed from $0$ to $J-1$ as in Section~\ref{sec:setup}.

\paragraph{Mean per-joint position error (MPJPE).}
MPJPE measures the average Euclidean distance between corresponding predicted and ground-truth joint positions:
\begin{equation}
\mathrm{MPJPE}
=\frac{1}{TJ}\sum_{t=1}^{T}\sum_{j=0}^{J-1}
\lVert\mathbf x_{t,j}-\mathbf x^{\mathrm{gt}}_{t,j}\rVert_2.
\label{eq:metric_mpjpe}
\end{equation}

\paragraph{Procrustes-aligned MPJPE (PA-MPJPE).}
PA-MPJPE measures joint-position error after aligning each predicted pose to its reference by a similarity transformation.
For each frame, the alignment minimizes the sum of squared distances over corresponding joints:
\begin{equation}
(a_t^*,\mathbf R_t^*,\mathbf b_t^*)
=\operatorname*{arg\,min}_{a\geq0,\,\mathbf R\in\mathrm{SO}(3),\,\mathbf b\in\mathbb R^3}
\sum_{j=0}^{J-1}\lVert a\mathbf R\mathbf x_{t,j}+\mathbf b-\mathbf x^{\mathrm{gt}}_{t,j}\rVert_2^2.
\end{equation}
With $\widetilde{\mathbf x}_{t,j}=a_t^*\mathbf R_t^*\mathbf x_{t,j}+\mathbf b_t^*$, the metric is
\begin{equation}
\mathrm{PA\text{-}MPJPE}
=\frac{1}{TJ}\sum_{t=1}^{T}\sum_{j=0}^{J-1}
\lVert\widetilde{\mathbf x}_{t,j}-\mathbf x^{\mathrm{gt}}_{t,j}\rVert_2.
\label{eq:metric_pa_mpjpe}
\end{equation}
The alignment removes differences in translation, rotation, and uniform scale without allowing reflections.

\paragraph{Mean per-joint velocity error (MPJVE).}
MPJVE measures the discrepancy between predicted and ground-truth joint velocities.
For a sampling interval $\Delta\tau$, define
$\mathbf v_{t,j}=(\mathbf x_{t,j}-\mathbf x_{t-1,j})/\Delta\tau$, and define $\mathbf v^{\mathrm{gt}}_{t,j}$ analogously.
For $T\geq2$,
\begin{equation}
\mathrm{MPJVE}
=\frac{1}{(T-1)J}\sum_{t=2}^{T}\sum_{j=0}^{J-1}
\lVert\mathbf v_{t,j}-\mathbf v^{\mathrm{gt}}_{t,j}\rVert_2.
\label{eq:metric_mpjve}
\end{equation}

\paragraph{Chamfer distance (CD).}
CD measures geometric agreement between the predicted and ground-truth skeletal point clouds without requiring joint correspondence.
For point sets $\mathcal P$ and $\mathcal Q$, we use the symmetric, unsquared Euclidean form:
\begin{equation}
\begin{aligned}
\operatorname{CD}(\mathcal P,\mathcal Q)
&=\frac{1}{2}\Biggl(
\frac{1}{|\mathcal P|}\sum_{\mathbf x\in\mathcal P}\min_{\mathbf y\in\mathcal Q}\lVert\mathbf x-\mathbf y\rVert_2\\
&\qquad+\frac{1}{|\mathcal Q|}\sum_{\mathbf y\in\mathcal Q}\min_{\mathbf x\in\mathcal P}\lVert\mathbf y-\mathbf x\rVert_2
\Biggr).
\end{aligned}
\label{eq:metric_chamfer}
\end{equation}
The two terms measure nearest-neighbor distances in opposite directions, and the factor $1/2$ averages them.
For each frame, $\mathcal P_t$ and $\mathcal Q_t$ contain the predicted and reference joint positions, respectively.
Averaging over frames gives the sequence score $T^{-1}\sum_{t=1}^{T}\operatorname{CD}(\mathcal P_t,\mathcal Q_t)$.

\subsection{Evaluation protocol}
\label{app:evaluation_protocol}
We evaluate two tasks using the metrics defined in Appendix~\ref{app:metrics}.

\paragraph{Matched-rig motion capture.}
The input video depicts motion on the same skeleton that is supplied to the model as the target rig.
We evaluate on the held-out 5\% of Zoo and the official MObjaverse Seen and Unseen splits, following the split assignments described in Appendix~\ref{app:data}.
The annotated motion that generated each input video serves as the reference, with predicted and reference joints matched on the supplied rig.
For CAST, predicted joint positions are obtained by forward kinematics from the predicted rotations and the target rest skeleton.

\paragraph{Cross-skeleton animation.}
The target skeleton differs from the source character's skeleton in the input video.
Our curated Mixamo test set contains 60 animated sequences, with ten animations for each of six character skeletons.
Each selected source sequence is paired with the other five target skeletons, producing 300 ordered source--target pairs.
The fixed manifest identifies the source motion and target rig for each pair.
For each pair, we download the corresponding animation for the target character from Mixamo, which supplies the motion through its own retargeting process.
This target-rig animation serves as the reference ground truth.
The prediction and reference therefore share the target skeleton and joint correspondence, even though the source and target skeletons differ.
The reference motions are used only for evaluation and are not provided to CAST at inference in either task.

\paragraph{Normalization and units.}
We follow TopoCap's unit-bounding-box normalization and distance-reporting convention~\citep{pu2026topocap}.
Let $L$ be the longest side of the target rest skeleton's axis-aligned bounding box before normalization.
The coordinates $\mathbf x$ used in Appendix~\ref{app:metrics} are root-centered positions divided by $L$, with the same fixed scale applied to the prediction and reference throughout the sequence.
Equivalently, they are one half of the coordinates whose rest bounding box has longest side $2$.
We multiply MPJPE and PA-MPJPE by $1000$ to follow the benchmark's millimeter reporting convention.
Applying the same length conversion to MPJVE gives millimeters per frame.
CD is reported directly in the normalized evaluation coordinates.
For MPJVE, we use $\Delta\tau=1$ frame, so the evaluated velocities are adjacent-frame joint displacements.
All four metrics include the root and all other valid joints and exclude padded joints and frames.
Only PA-MPJPE applies a per-frame similarity alignment.

\paragraph{Full-sequence inference and aggregation.}
For our model, temporal windows have length 81 and are non-overlapping during evaluation.
The predictions from consecutive windows are concatenated to form the complete animation.
MCA V2$^{\ast}$ uses its public 32-frame windows, TopoCap processes 8 frames per pass, and Puppeteer is run in 32-frame chunks.
For both tasks, all metrics are computed over complete animations.
Temporal differences include adjacent frames on opposite sides of an inference-window boundary.
Aggregation is by animation rather than by window or chunk: however a sequence is divided at inference time, it contributes exactly one weight, and dataset scores are arithmetic means of these per-animation scores for every method.

\subsection{User-study protocol and questionnaire}
\label{app:user_study}
\paragraph{Interface and stimuli.}
We conduct an anonymous web-based ranking study comparing CAST-L, MCA V2$^{\ast}$, and TopoCap.
The questionnaire supports Chinese and English; its interface is shown in Figure~\ref{fig:user_study_interface}.
Each item presents one input or source animation and three anonymized method outputs labeled A, B, and C.
The method names are not shown during ranking.
The study contains 20 items per participant: ten matched-rig motion-capture items, including five Zoo and five MObjaverse items, and ten cross-skeleton items, including three Zoo, three MObjaverse, and four Mixamo items.
The matched-rig block is presented first, followed by the cross-skeleton block.
We randomize the item order within each block for every participant and independently randomize the assignment of methods to A/B/C for each item.

\begin{figure*}[!t]
\centering
\includegraphics[width=\linewidth,height=0.72\textheight,keepaspectratio]{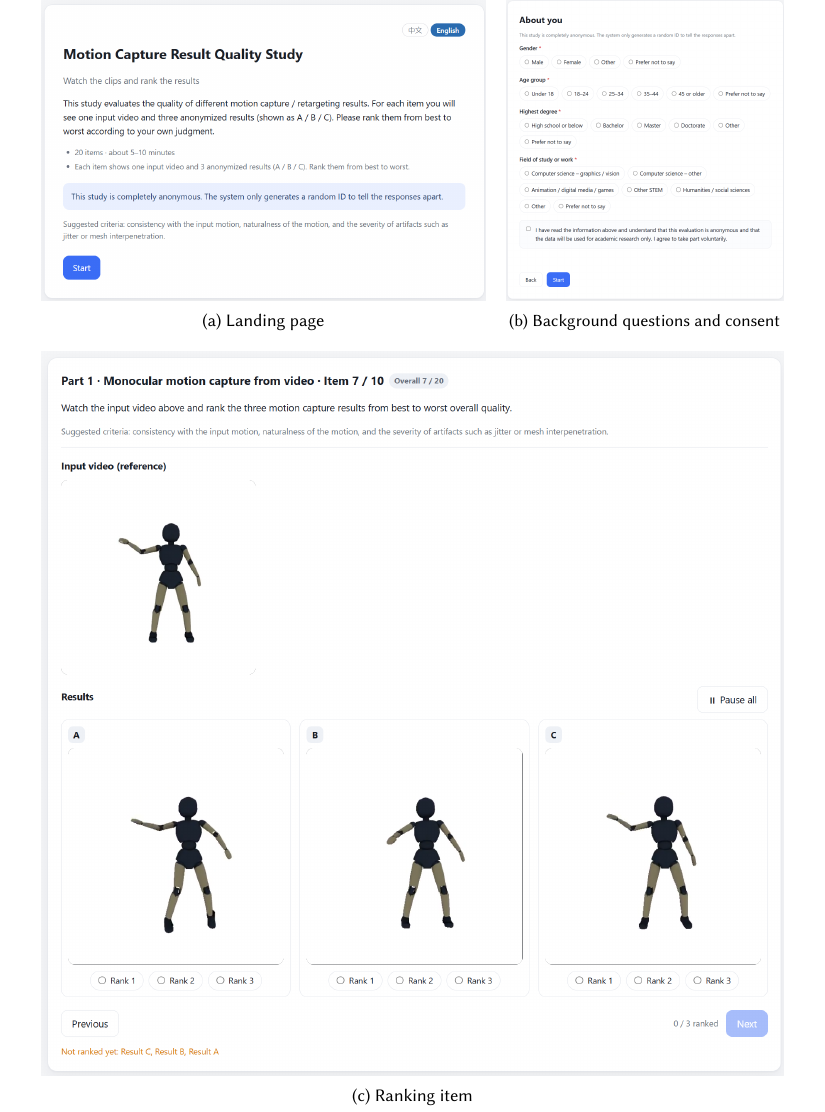}
\caption{\textbf{The ranking interface used for the user study.} (a) The landing page, which states the task, the estimated duration, and the anonymity of the study. (b) The background questions and the consent box. (c) A ranking item: one input or source animation above three anonymized results, each assigned a distinct rank.}
\label{fig:user_study_interface}
\end{figure*}

\paragraph{Ranking questions and instructions.}
The English interface uses the following task-specific prompts:
\begin{enumerate}
    \item \textbf{Matched-rig motion capture:} ``Watch the input video above and rank the three motion capture results from best to worst overall quality.''
    \item \textbf{Cross-skeleton animation:} ``Using the source skeletal animation, rank the three cross-skeleton results from best to worst overall quality.''
\end{enumerate}
Participants are asked to consider consistency with the input motion, motion naturalness, and the severity of artifacts such as jitter or mesh interpenetration.
Participants use these dimensions to form a single overall-quality ranking without assigning separate scores to each dimension.
Participants assign ranks 1 (best), 2, and 3 (worst), using each rank exactly once.
The interface requires a complete ranking without ties before proceeding.
Videos are muted and looped, with individual playback controls and a shared play/pause control for the three candidate outputs.
Participants may replay the videos and revisit earlier items before submitting their answers.

\paragraph{Background questions and consent.}
Before ranking, participants answer four single-choice background questions.
Each requires a selection but includes a ``Prefer not to say'' option:
\begin{itemize}
    \item \textbf{Gender:} Male, Female, Other, or Prefer not to say.
    \item \textbf{Age group:} Under 18, 18--24, 25--34, 35--44, 45 or older, or Prefer not to say.
    \item \textbf{Highest degree:} High school or below, Bachelor, Master, Doctorate, Other, or Prefer not to say.
    \item \textbf{Field of study or work:} Computer science--\allowbreak graphics/\allowbreak vision, Computer science--\allowbreak other, Animation/\allowbreak digital media/\allowbreak games, Other STEM, Humanities/social sciences, Other, or Prefer not to say.
\end{itemize}
The interface states that participation is voluntary, the evaluation is anonymous, and the data are used only for academic research.
Participants must actively select an initially unchecked consent box before starting the ranking tasks.

\paragraph{Anonymity and response recording.}
No account, name, or email address is requested, and the study application does not record IP addresses in the submitted response records.
A randomly generated participant ID associates the answers within one questionnaire and prevents duplicate storage of the same submission.
The browser stores progress locally to support resuming the questionnaire.
Submitted records contain the rankings, background-question selections, consent status, presentation order and randomization seed, interface language, timing information, and browser user-agent string.

\paragraph{Participants and aggregation.}
Eighteen participants completed all 20 items, producing 360 rankings, with a mean completion time of 580 seconds.
The recorded sample includes 13 male and 5 female participants.
Eight participants are aged 18--24, eight are aged 25--34, one is aged 35--44, and one preferred not to report their age.
The highest reported degrees are doctorate ($n=6$), master's ($n=6$), and bachelor's ($n=5$), with one participant preferring not to say.
Nine participants report computer graphics or vision, six report other computer-science fields, and three prefer not to say.
We pool all 360 rankings, with each task contributing 180 rankings from 18 participants evaluating ten items.
First-choice rate is the fraction of rankings that place a method first, and mean rank is its average assigned rank, with lower values indicating stronger preference.
These are descriptive summaries of repeated judgments by the same 18 participants.

\begin{figure*}[!t]
\centering
\includegraphics[width=\linewidth]{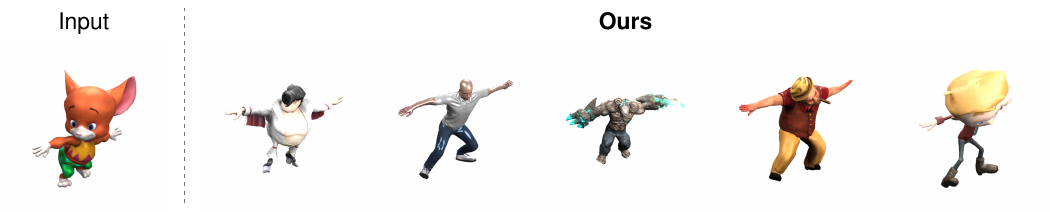}
\caption{\textbf{Twist errors.} The input is shown to the left of the vertical divider; CAST (Ours) outputs on five target characters are shown to the right. The overall body configuration is largely preserved, but twist errors cause the head orientation to differ from the input. With rank-two directional support, outgoing bone directions constrain swing but do not determine axial twist.}
\label{fig:limitation_twist}
\end{figure*}

\section{Additional Model Analysis}
\label{app:model_analysis}

\subsection{Graph-attention analysis protocol}
\label{app:graph_attention}
Figure~\ref{fig:graph_attention} supports the whole-model analysis in Section~\ref{sec:main_results} by comparing CAST-B trained with and without the complete GRACE configuration, alongside MCA V2$^{\ast}$ as a reference.
We evaluate graph attention on the Zoo dataset, using a deterministic selection with seed 42, fixed before inspecting attention, and the 0-degree view for every selected clip.
The selection contains 126 training and 11 test sequences, so this analysis characterizes learned attention across the skeleton collection rather than held-out generalization alone.
All three models receive the same selected clips and valid joint queries.
The analysis covers 16{,}123 valid frames and 860{,}095 frame--query observations per model in 268 non-overlapping 81-frame windows, excluding tail padding.
All models use the same windows for this attention comparison, with MCA V2$^{\ast}$ evaluated using its re-trained checkpoint.
Its quantitative motion evaluation retains the window setting in Appendix~\ref{app:evaluation_protocol}.
Both CAST-B variants use frozen DINOv3 features, eight encoder and eight decoder blocks, width 256, eight attention heads, and checkpoints trained for 6{,}000 steps with seed 42 and the same data split.
The GRACE configuration enables the direction head, correction module, and associated direction losses together.
We compare separately trained configurations, so the results reflect the combined contribution of these components rather than an inference-time toggle of correction alone.

We capture softmax graph attention in the final encoder layer, layer 8 with one-based numbering (code index 7).
This layer allows attention to all valid joints, so locality is not imposed by an ancestor mask.
We average over the eight heads and then over keys within each nonempty topology group for each valid frame--query pair, excluding the query itself without renormalizing the remaining weights.
The groups are direct parent, direct child, grandparent, grandchild, siblings, joints at shortest-path distance exactly three on the undirected skeleton graph, and all other non-self joints.
Empty groups are omitted.
Figure~\ref{fig:graph_attention} pools these frame--query values into boxplots without averaging per skeleton.
Boxes show interquartile ranges, with whiskers spanning the 5th--95th percentiles.
Circles indicate means and white lines indicate medians, with outliers omitted.
Recording attention leaves the native forward outputs unchanged in capture-equivalence checks for all three models.

In this layer, both CAST-B variants emphasize direct parents and children.
With GRACE, mean parent attention increases and mean attention to each more distant relation group decreases, while mean child attention is lower but remains prominent.
These observations characterize the final encoder aggregation and do not establish the same attention pattern in every layer or a causal effect of correction alone.

\subsection{Scope of component evidence}
\label{app:ablations}
\paragraph{Reported comparisons.}
Table~\ref{tab:matched_ablation} reports CAST-B comparisons with and without GRACE for DINOv2 and DINOv3.
The consistent reductions in MPJPE and PA-MPJPE support the contribution of the complete GRACE configuration under both visual encoders.
They do not separately identify the effects of its direction supervision, alignment proposal, reliability gate, or corrected-parent recursion.

\paragraph{Relation to the ideal analysis.}
The ideal cancellation result concerns ancestor-composed global orientations, whereas the reported positional metrics aggregate the final FK output over joints.
Lower positional error therefore does not by itself establish that correction attenuates global-rotation error at every tree depth or in every observability class.
We do not report component-isolating or depth-stratified error-propagation experiments here.

\paragraph{Backbone controls.}
Attention order, final-block feature fusion, and model scaling require separate capacity-controlled comparisons before the full-system gains can be assigned to them.
In particular, matching block counts or widths alone does not establish equal parameter counts, and raw attention weights do not substitute for such controls.

\section{Limitations}
\label{sec:limitations}

GRACE can correct only the rotational components constrained by outgoing bone directions.
Leaves provide no such constraints, and collinear outgoing bones leave axial twist undetermined, so these degrees of freedom depend on the learned rotation predictor.
In particular, a leaf's rotation does not affect skeletal joint positions but can change the deformation of its attached mesh.
Low joint-position errors therefore do not ensure accurate surface deformation.
Additional surface or orientation cues would be needed to constrain these ambiguities geometrically.

Figure~\ref{fig:limitation_twist} illustrates twist errors: the target characters reproduce the overall body pose, but axial twist causes the head orientation to differ from the input.
Rank-two directional support constrains swing while leaving axial twist undetermined by bone directions alone.

CAST assumes fixed parent-local bone offsets and represents skeletal motion solely through joint rotations.
Under this assumption, forward kinematics changes joint positions by rotating the skeleton while preserving the offsets specified by its rest pose.
Some animations also translate bones relative to their parents, causing these local offsets to vary over time.
Such motion lies outside the rotation-only representation, so even accurate rotation predictions may be insufficient to reproduce the complete animation.
Supporting these motions would require extending CAST to predict time-varying local translations alongside joint rotations.

\section{Additional Qualitative Results}
\label{app:qualitative_supplement}
We supplement Figure~\ref{fig:qualitative_results} with matched-rig examples on Zoo and MObj, followed by Mixamo cross-skeleton results, cross-skeleton transfer between Zoo and MObj skeletons, and masked-frame completion examples.
Dataset and split sizes are detailed in Appendix~\ref{app:data}.
In the matched-rig panels, each example compares methods over five frames, with time advancing from top to bottom.

\subsection{Zoo}
\label{app:qualitative_zoo}
Figures~\ref{fig:qualitative_zoo_1} and~\ref{fig:qualitative_zoo_2} show the matched-rig Zoo examples.
\begin{figure*}[p]
\centering
\includegraphics[page=1,width=\linewidth,height=0.90\textheight,keepaspectratio]{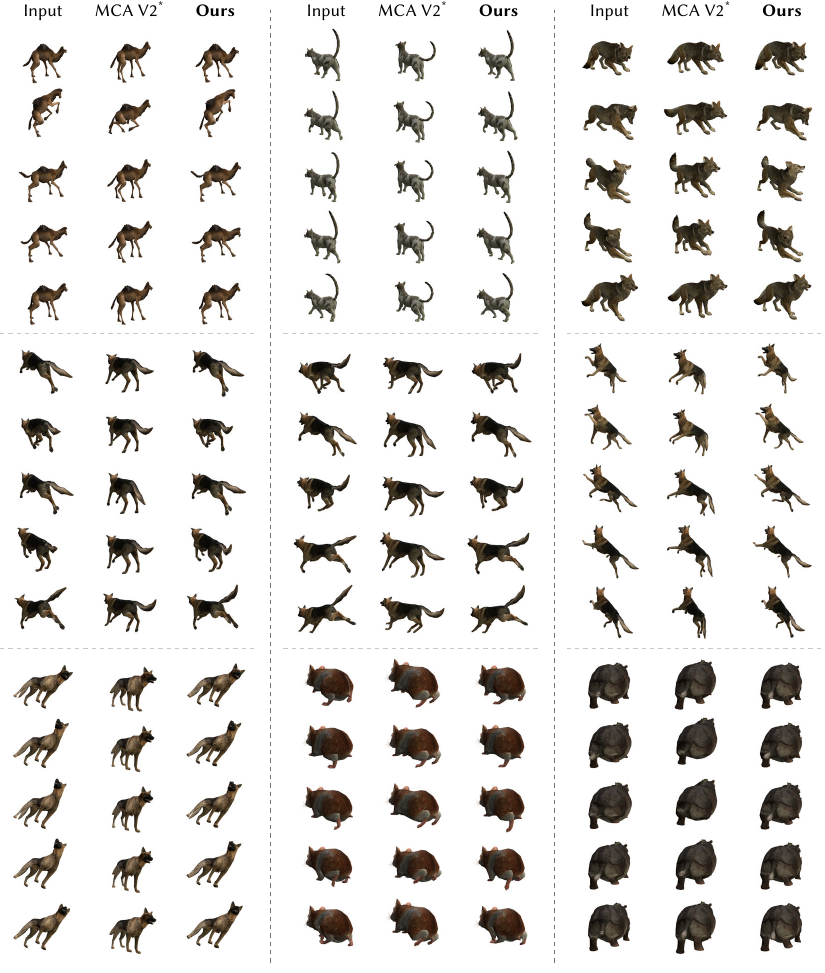}
\caption{\textbf{Matched-rig motion capture on Zoo.} Columns within each panel show Input, MCA V2$^{\ast}$, and Ours. All methods use the same rig, and successive rows show the motion advancing over time.}
\label{fig:qualitative_zoo_1}
\end{figure*}

\begin{figure*}[p]
\centering
\includegraphics[page=2,width=\linewidth,height=0.90\textheight,keepaspectratio]{figures/qualitative_supplement_zoo.pdf}
\caption{\textbf{Matched-rig motion capture on Zoo (continued).} Columns within each panel: Input, MCA V2$^{\ast}$, and Ours. Rows show five successive sampled frames.}
\label{fig:qualitative_zoo_2}
\end{figure*}

\subsection{MObj}
\label{app:qualitative_obj}
Figures~\ref{fig:qualitative_obj_1}--\ref{fig:qualitative_obj_4} show the matched-rig MObj examples.
\begin{figure*}[p]
\centering
\includegraphics[page=1,width=\linewidth,height=0.90\textheight,keepaspectratio]{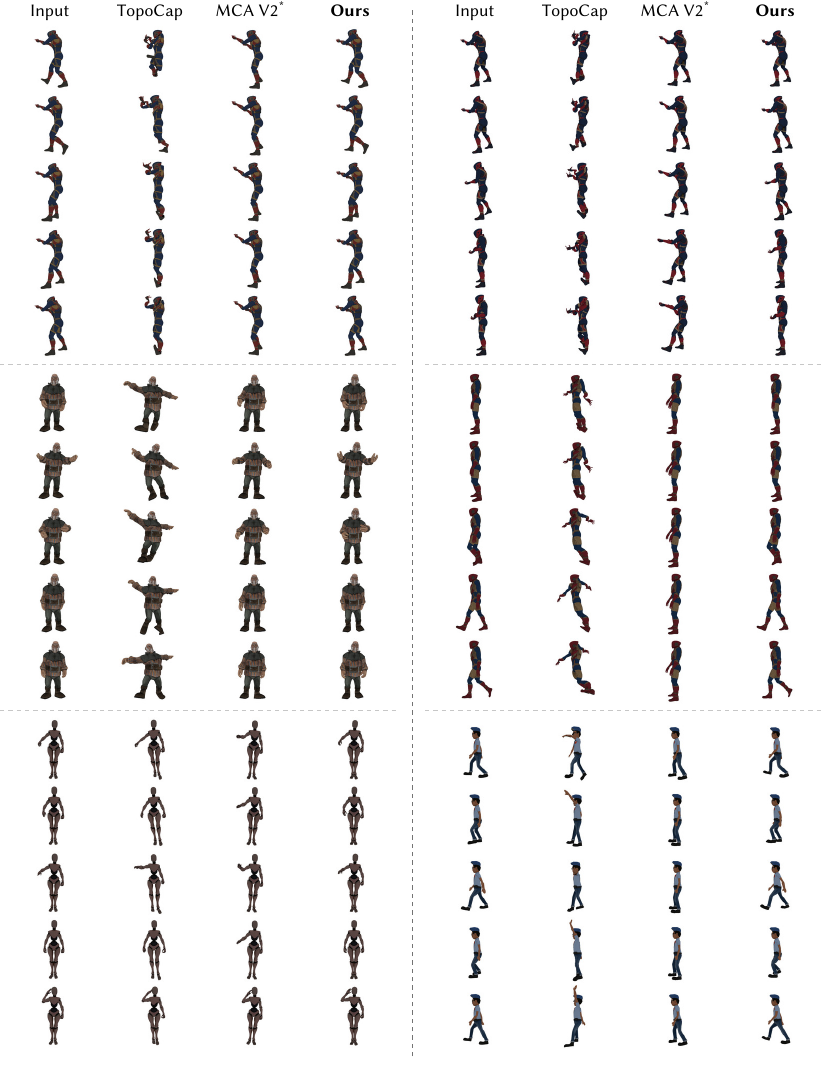}
\caption{\textbf{Matched-rig motion capture on MObj.} Columns within each panel show Input, TopoCap, MCA V2$^{\ast}$, and Ours, evaluated on the same rig. Time advances from top to bottom.}
\label{fig:qualitative_obj_1}
\end{figure*}

\newcommand{\objqualpage}[1]{%
\begin{figure*}[p]
\centering
\includegraphics[page=#1,width=\linewidth,height=0.90\textheight,keepaspectratio]{figures/qualitative_supplement_obj.pdf}
\caption{\textbf{Matched-rig motion capture on MObj (continued).} Columns within each panel: Input, TopoCap, MCA V2$^{\ast}$, and Ours. Rows show five successive sampled frames.}
\label{fig:qualitative_obj_#1}
\end{figure*}

}
\objqualpage{2}
\objqualpage{3}
\objqualpage{4}

\subsection{Mixamo cross-skeleton results}
\label{app:qualitative_mixamo}
Each action group shows the source-video input followed by CAST (Ours) on five target characters. The input is separated by a vertical rule, and rows show five successive frames from top to bottom.

\begin{figure*}[p]
\centering
\includegraphics[page=1,width=\linewidth,height=0.90\textheight,keepaspectratio]{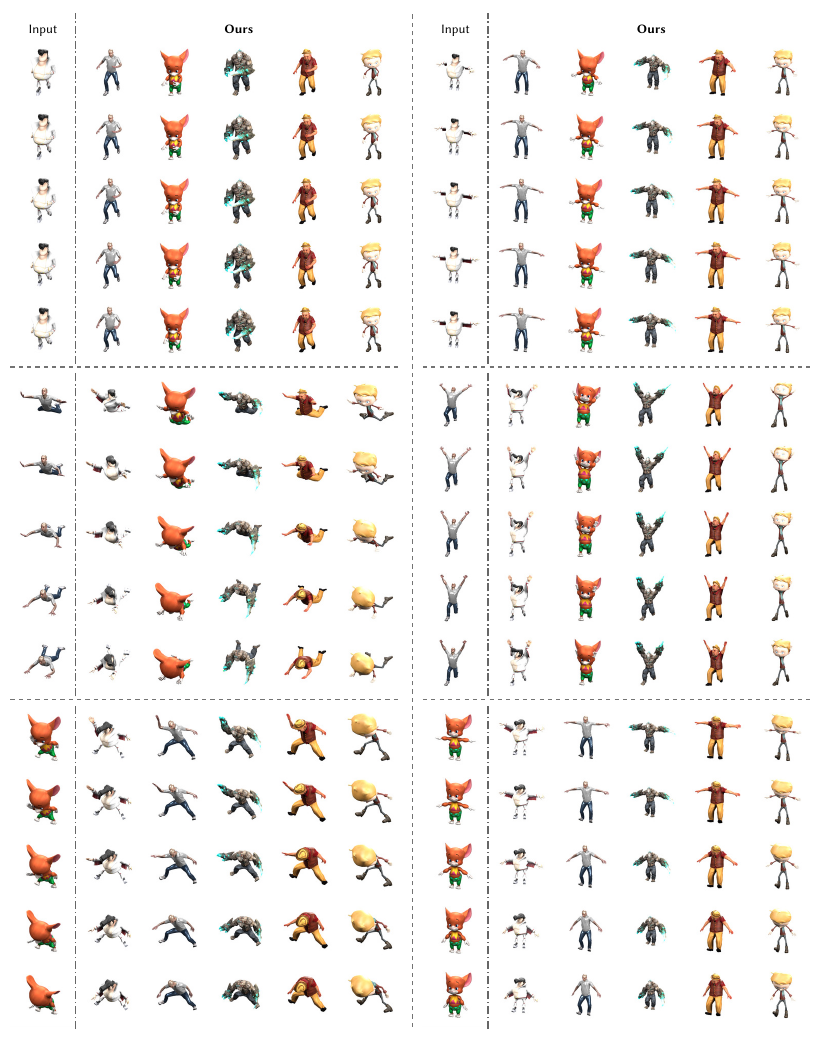}
\caption{\textbf{Cross-skeleton motion capture on Mixamo.} Each action group shows the input and CAST (Ours) on five target characters over five successive frames.}
\label{fig:qualitative_mixamo_1}
\end{figure*}

\newcommand{\mixamoqualpage}[1]{%
\begin{figure*}[p]
\centering
\includegraphics[page=#1,width=\linewidth,height=0.90\textheight,keepaspectratio]{figures/qualitative_supplement_mixamo.pdf}
\caption{\textbf{Cross-skeleton motion capture on Mixamo (continued).} Each action group shows the input and CAST (Ours) on five target characters over five successive frames.}
\label{fig:qualitative_mixamo_#1}
\end{figure*}

}
\mixamoqualpage{2}

\subsection{Cross-skeleton transfer}
\label{app:qualitative_cross_skeleton}
Each example below is one source/target skeleton pair: the left column (Input) shows a frame of the source clip and the right column (Ours) shows the motion CAST produces on the target skeleton, over five successive frames.
Each direction is illustrated with twenty source/target pairs.
Because no retargeted reference motion is available for these pairs, the transfer is presented qualitatively and no metrics are reported for them.
Quantitative cross-skeleton results are given on the Mixamo benchmark (Appendix~\ref{app:evaluation_protocol}).

\begin{figure*}[p]
\centering
\includegraphics[page=1,width=\linewidth]{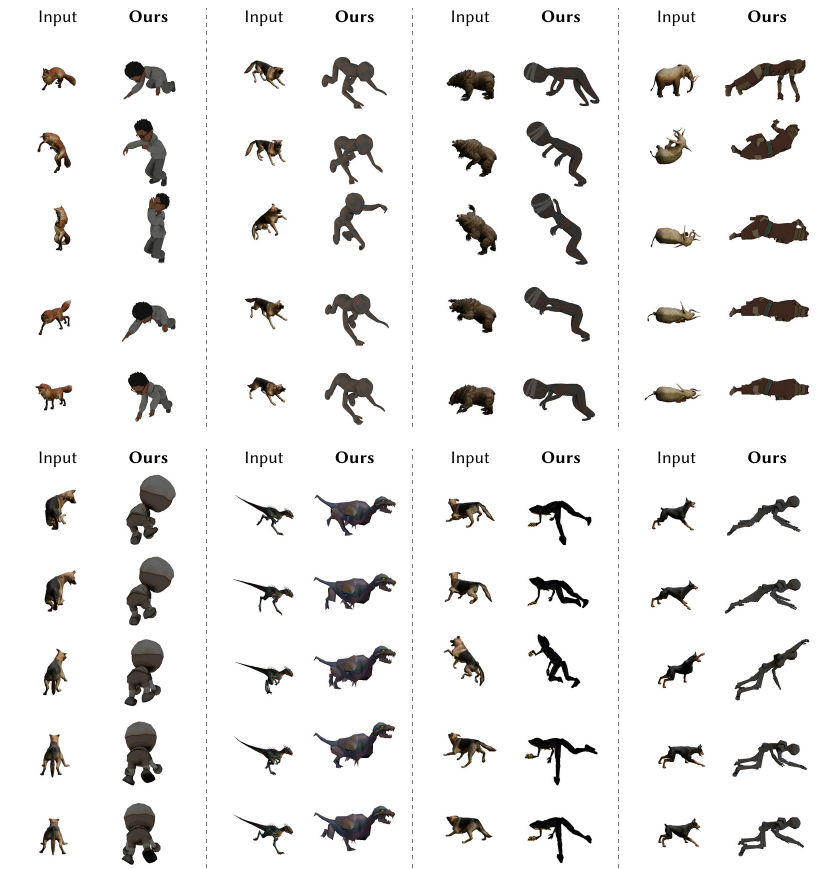}
\caption{\textbf{Cross-skeleton motion capture from Zoo to MObj.} Qualitative examples of this transfer direction, each shown as five successive frames of the source clip (Input) and our result on the target skeleton (Ours).}
\label{fig:qualitative_cross_zoo2obj_1}
\end{figure*}

\begin{figure*}[p]
\centering
\includegraphics[page=2,width=\linewidth]{figures/qualitative_supplement_transfer_fullwidth.pdf}
\caption{\textbf{Cross-skeleton motion capture from Zoo to MObj (continued).} Further qualitative examples for the same transfer direction.}
\label{fig:qualitative_cross_zoo2obj_2}
\end{figure*}

\begin{figure*}[p]
\centering
\includegraphics[page=3,width=\linewidth]{figures/qualitative_supplement_transfer_fullwidth.pdf}
\caption{\textbf{Cross-skeleton motion capture between Zoo and MObj.} Top: remaining Zoo-to-MObj examples. Bottom: MObj-to-Zoo examples. Each pair shows five successive frames of the source clip (Input) and our result on the target skeleton (Ours).}
\label{fig:qualitative_cross_zoo2obj_3}
\label{fig:qualitative_cross_obj2zoo_1}
\end{figure*}

\begin{figure*}[p]
\centering
\includegraphics[page=4,width=\linewidth]{figures/qualitative_supplement_transfer_fullwidth.pdf}
\caption{\textbf{Cross-skeleton motion capture from MObj to Zoo (continued).} Further qualitative examples for the same transfer direction.}
\label{fig:qualitative_cross_obj2zoo_2}
\end{figure*}

\begin{figure*}[p]
\centering
\includegraphics[page=5,width=\linewidth]{figures/qualitative_supplement_transfer_fullwidth.pdf}
\caption{\textbf{Cross-skeleton motion capture from MObj to Zoo (continued).} Remaining qualitative examples for the same transfer direction.}
\label{fig:qualitative_cross_obj2zoo_3}
\end{figure*}

\begin{figure*}[p]
\centering
\includegraphics[page=6,width=\linewidth]{figures/qualitative_supplement_transfer_fullwidth.pdf}
\caption{\textbf{Cross-skeleton motion capture within Zoo.} Qualitative examples of this transfer direction, each shown as five successive frames of the source clip (Input) and our result on the target skeleton (Ours).}
\label{fig:qualitative_cross_zoo2zoo_1}
\end{figure*}

\begin{figure*}[p]
\centering
\includegraphics[page=7,width=\linewidth]{figures/qualitative_supplement_transfer_fullwidth.pdf}
\caption{\textbf{Cross-skeleton motion capture within Zoo (continued).} Further qualitative examples for the same transfer direction.}
\label{fig:qualitative_cross_zoo2zoo_2}
\end{figure*}

\begin{figure*}[p]
\centering
\includegraphics[page=8,width=\linewidth]{figures/qualitative_supplement_transfer_fullwidth.pdf}
\caption{\textbf{Cross-skeleton motion capture within Zoo and MObj.} Top: remaining Zoo-to-Zoo examples. Bottom: MObj-to-MObj examples. Each pair shows five successive frames of the source clip (Input) and our result on the target skeleton (Ours).}
\label{fig:qualitative_cross_zoo2zoo_3}
\label{fig:qualitative_cross_obj2obj_1}
\end{figure*}

\begin{figure*}[p]
\centering
\includegraphics[page=9,width=\linewidth]{figures/qualitative_supplement_transfer_fullwidth.pdf}
\caption{\textbf{Cross-skeleton motion capture within MObj (continued).} Further qualitative examples for the same transfer direction.}
\label{fig:qualitative_cross_obj2obj_2}
\end{figure*}

\begin{figure*}[p]
\centering
\includegraphics[page=10,width=\linewidth]{figures/qualitative_supplement_transfer_fullwidth.pdf}
\caption{\textbf{Cross-skeleton motion capture within MObj (continued).} Remaining qualitative examples for the same transfer direction.}
\label{fig:qualitative_cross_obj2obj_3}
\end{figure*}

\subsection{Masked-frame completion examples}
\label{app:completion_examples}
Figures~\ref{fig:completion_examples}--\ref{fig:completion_examples_3} show the masked-frame completion examples.
\begin{figure*}[p]
\centering
\includegraphics[width=\linewidth,height=0.88\textheight,keepaspectratio]{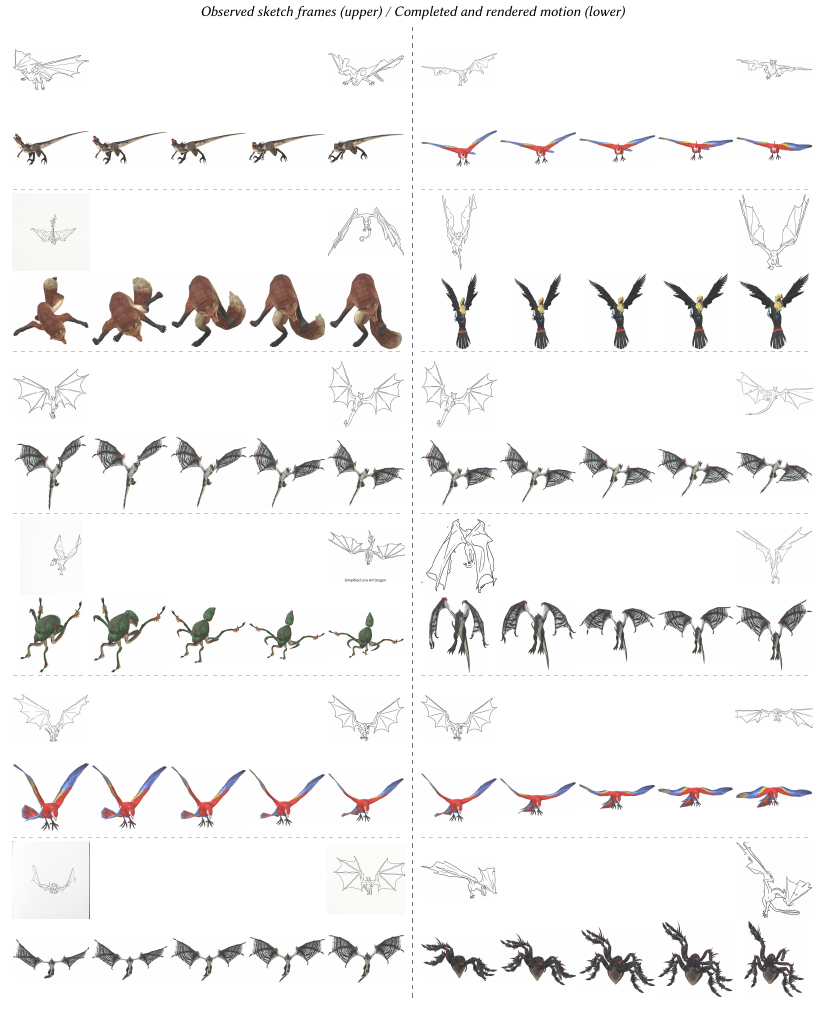}
\caption{\textbf{Masked-frame completion examples.} The first twelve of thirty-five examples drawn from 22 animation sequences. Each example shows exactly two observed sketch frames in the upper row, aligned with the first and last frames of the completed five-frame interval shown below. At inference, one image is observed every four frames and the remaining frames are completed by CAST.}
\label{fig:completion_examples}
\end{figure*}

\newcommand{\completionqualpage}[2]{%
\begin{figure*}[p]
\centering
\includegraphics[width=\linewidth,height=0.90\textheight,keepaspectratio]{figures/cast_motion_completion_#1.pdf}
\caption{\textbf{Masked-frame completion examples (continued).} #2}
\label{fig:completion_examples_#1}
\end{figure*}

}
\completionqualpage{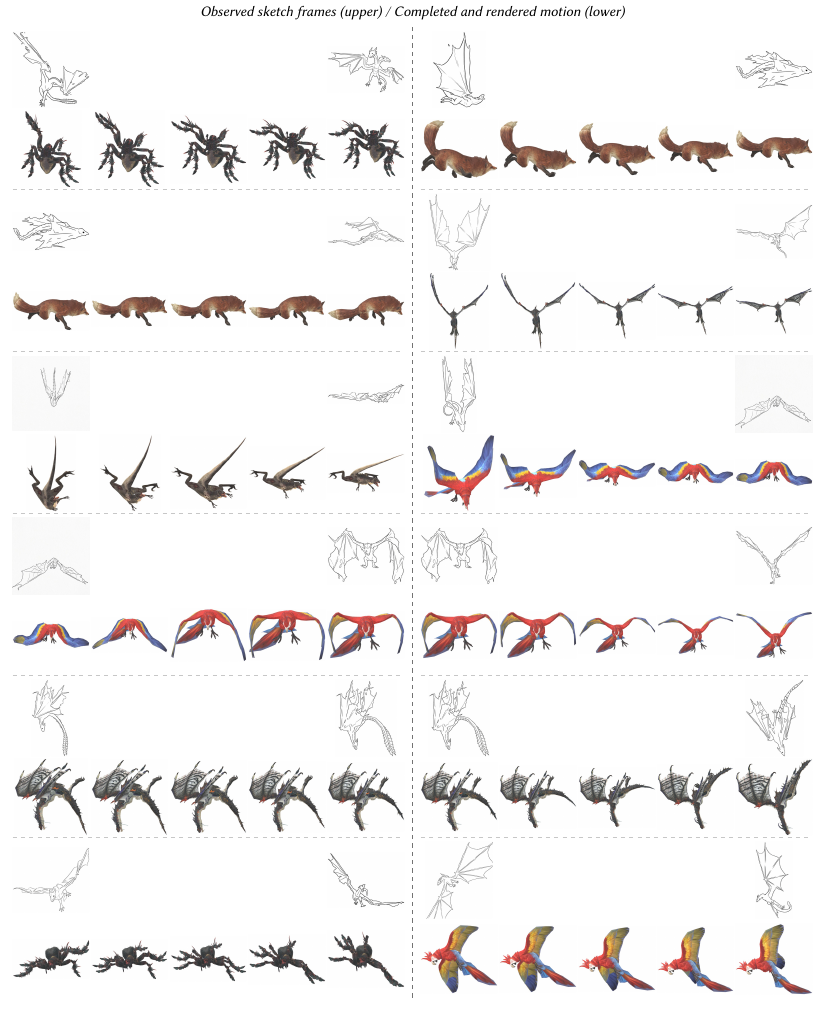}{The next twelve of the thirty-five examples, using the same two-endpoint observation setting.}
\completionqualpage{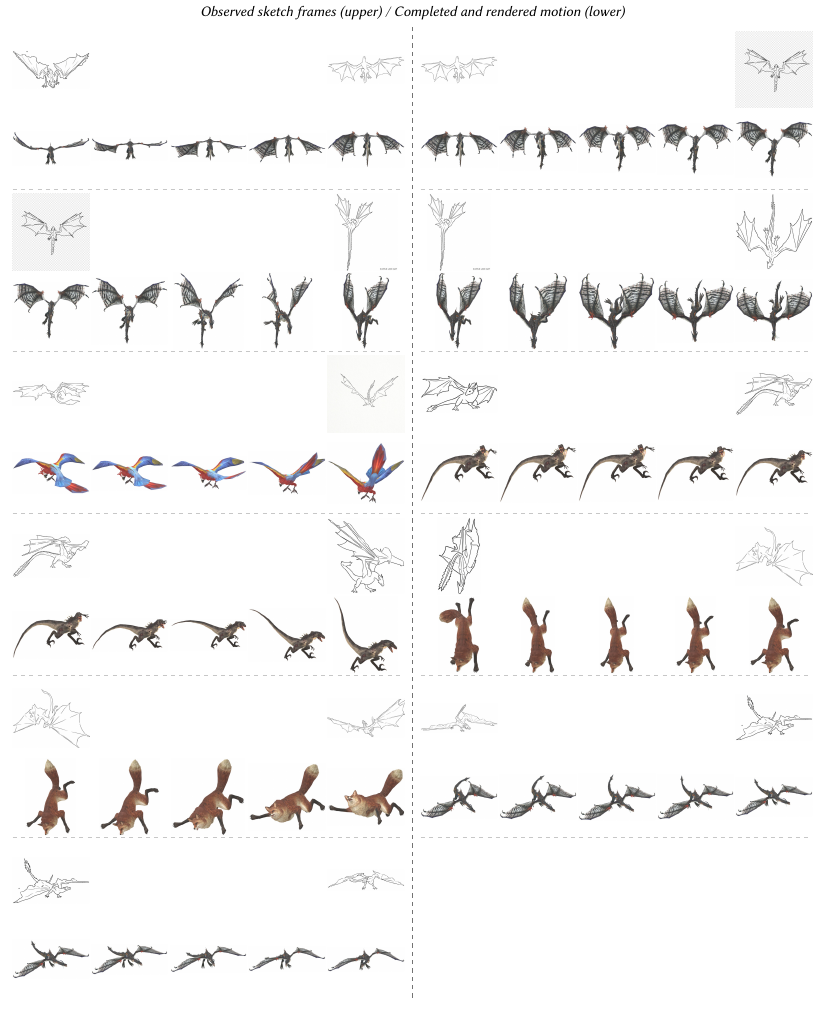}{The remaining eleven of the thirty-five examples, using the same two-endpoint observation setting.}

\end{document}